\def\eqref#1{equation~\ref{#1}}
\def\1{\bm{1}}
\DeclareMathAlphabet{\mathsfit}{\encodingdefault}{\sfdefault}{m}{sl}
\SetMathAlphabet{\mathsfit}{bold}{\encodingdefault}{\sfdefault}{bx}{n}
\definecolor{deepgreen}{rgb}{0.1, 0.1, 0.6}
\newtheorem{lemma}{Lemma}
\newtheorem{remark}{Remark}
\newtheorem{theorem}{Theorem}
\newtheorem{assumption}{Assumption}
\newtheorem{definition}{Definition}
\newtheorem{corollary}{Corollary}
\definecolor{codegreen}{RGB}{65, 150, 90}
\definecolor{codeblue}{RGB}{66, 91, 201}
\definecolor{codeyellow}{RGB}{201, 154, 66}
\definecolor{codeblack}{RGB}{0, 0, 0}
\definecolor{backcolour}{RGB}{255, 255, 255}
\definecolor{textcolor}{RGB}{0, 0, 0}
\lstdefinestyle{PythonStyle}{
    language=Python,
    basicstyle=\ttfamily\small\color{textcolor},
    backgroundcolor=\color{backcolour},
    commentstyle=\color{codeyellow}\itshape,
    keywordstyle=\color{codeblue}\bfseries,
    stringstyle=\color{codegreen},
    numberstyle=\tiny\color{codeblue},
    breakatwhitespace=false,
    breaklines=true,
    captionpos=b,
    keepspaces=true,
    numbers=none,
    showspaces=false,
    showstringspaces=false,
    showtabs=false,
    tabsize=4,
    frame=leftline,
    framesep=5pt,
    framerule=0.5pt,
    rulecolor=\color{codeblack},
    aboveskip=15pt,
    belowskip=15pt
}
\title{Compositional Generalization from Learned Skills via CoT Training: A Theoretical and Structural Analysis for Reasoning}
\author{%
  Xinhao Yao\textsuperscript{1,2,3}, Ruifeng Ren\textsuperscript{1}, Yun Liao\textsuperscript{4}, Lizhong Ding\textsuperscript{5}, Yong Liu\textsuperscript{1,2,3}\thanks{Corresponding author.} \\
  \textsuperscript{1}Gaoling School of Artificial Intelligence, Renmin University of China, Beijing, China\\
    \textsuperscript{2}Beijing Key Laboratory of Research on Large Models and Intelligent Governance\\
    \textsuperscript{3}Engineering Research Center of Next-Generation Intelligent Search and Recommendation, MOE\\
    \textsuperscript{4}Tianjin
University of Science and Technology, \textsuperscript{5}Beijing Institute of Technology\\
\texttt{\{yaoxinhao021978, liuyonggsai\}@ruc.edu.cn} \\
}
\begin{document}

\maketitle

\begin{abstract}
Chain-of-Thought (CoT) training has markedly advanced the reasoning capabilities of large language models (LLMs), yet the mechanisms by which CoT training enhances generalization remain inadequately understood. In this work, we demonstrate that compositional generalization is fundamental: models systematically combine simpler learned skills during CoT training to address novel and more complex problems. Through a theoretical and structural analysis, we formalize this process: \ding{182} Theoretically, the information-theoretic generalization bounds through distributional divergence can be decomposed into in-distribution (ID) and out-of-distribution (OOD) components. Specifically, the non-CoT models fail on OOD tasks due to unseen compositional patterns, whereas CoT-trained models achieve strong generalization by composing previously learned skills. In addition, controlled experiments and real-world validation confirm that CoT training accelerates convergence and enhances generalization from ID to both ID and OOD scenarios while maintaining robust performance even with tolerable noise. \ding{183} Structurally, CoT training internalizes reasoning into a two-stage compositional circuit, where the number of stages corresponds to the explicit reasoning steps during training. Notably, CoT-trained models resolve intermediate results at shallower layers compared to non-CoT counterparts, freeing up deeper layers to specialize in subsequent reasoning steps.  A key insight is that CoT training teaches models how to think—by fostering compositional reasoning—rather than merely what to think, through the provision of correct answers alone. This paper offers valuable insights for designing CoT strategies to enhance LLMs' reasoning robustness\footnote{Code is available at \url{https://github.com/chen123CtrlS/T-CotMechanism}}.
\end{abstract}

\section{Introduction}
Training large language models (LLMs) to generate explicit reasoning chains marks a pivotal shift in AI development. This Chain-of-Thought (CoT) \citep{wei2022chain} paradigm moves beyond mere data scaling \citep{kaplan2020scalinglawsneurallanguage,hoffmann2022an} toward cultivating advanced reasoning strategies \citep{lightman2024lets}, resulting in more effective learning outcomes \citep{yue2023mammothbuildingmathgeneralist,yu2024metamath,wang-etal-2024-math,havrilla2024teaching,shao2024deepseekmathpushinglimitsmathematical,yu2024flowreasoningtrainingllmsdivergent,kim2023the,hsieh-etal-2023-distilling,ho-etal-2023-large,lightman2024lets}. Leading organizations—such as OpenAI, through reinforcement fine-tuning (RFT/ReFT) \citep{trung-etal-2024-reft} of its O1 models \citep{openai2024}, and DeepSeek, via the incorporation of long CoT cold-start data in DeepSeek-R1 \citep{deepseekai2025deepseekr1incentivizingreasoningcapability}—are increasingly treating step-by-step reasoning annotations not merely as a prompting tool, but as a fundamental training objective.

Despite its empirical success, the underlying mechanisms of CoT remain a widely debated topic \citep{prystawski2023why}. 
Recent studies demonstrate that CoT enhances transformers' \citep{vaswani2017attention} expressiveness and computational capacity, enabling them to solve problems in higher complexity classes when processing sufficiently long sequences \citep{feng2023towards,merrill2024expressivepowertransformerschain,li2024chain,prabhakar2024decipheringfactorsinfluencingefficacy}. Other prior efforts have advanced understanding through case studies of functions learnable in-context with CoT \citep{li2023dissecting,bhattamishra2024understanding,li2025training,yang2025chainofthought} and analyses of multi-step model estimation errors \citep{hu2024unveilingstatisticalfoundationschainofthought,gan2025rethinkingexternalslowthinkingsnowball}. However, these approaches do not explain how such capabilities emerge during the training of transformers to generate reasoning chains—precisely when core model competencies are being formed \citep{li2020train,zhou2023lima}. In light of this concern, \citet{kim2025transformers,wen2025from} focus on the parity problem and highlight that CoT improves the sample efficiency of transformers. Meanwhile, \citet{wang2025indistributionsuccessscalingcurves} assess different data collection strategies and point out that the granularity of CoT data strongly correlates with model performance. Crucially, it remains unclear: \textit{\textbf{(Q1)}} Does CoT training improve reasoning generalization across both in-distribution (ID) and out-of-distribution (OOD) scenarios—and if so, what theoretical principles govern this ability? \textit{\textbf{(Q2)}} How is this generalization capability realized within the model's internal representations?

In this paper, we propose that \textbf{compositional generalization is fundamental}: \textit{models systematically combine simpler learned skills during CoT training to address novel and more complex problems}. We formalize this process through a unified theoretical framework and validate it via a detailed structural analysis of the model's internal computational circuits.

$\bullet$ \textbf{Generalization Error Analysis} (Section \ref{sec:theory}, for \textbf{Q1}). We provide an in-depth theoretical analysis to substantiate this proposition. Specifically, we first examine the KL divergence $D_{\text{KL}}(\cdot)$ between the training and test distributions. We then employ information-theoretic \citep{xu2017information,russo2019dataexploration,steinke2020reasoning,wang2022facets,wang2024generalization} methods  to derive generalization bounds based on distributional divergence. Theorem \ref{main:theorem1} (Remark \ref{main:remark1}) shows the generalization error naturally decomposes into ID and OOD components: (i) For ID, the compositional patterns of the test problems \textbf{exactly match} those seen in the training set. The model only needs to recognize the problem type and retrieve the corresponding compositional pattern learned during training. As a result, the generalization error approaches zero under sufficient training (i.e., $D_{\text{KL}}(P^{\text{ID}}_{\text{test}}||P_{\text{train}})=0$), regardless of the use of CoT. (ii) For OOD, models trained without CoT fail to generalize, as the OOD test problems feature \textbf{novel} compositional patterns of learned, simpler skills. In contrast, CoT-trained models excel in these settings by leveraging exposure to the necessary simpler skills during training, achieving near-perfect OOD performance (Theorem \ref{main:theorem2}). (iii) Additionally, adding noise to training data raises the generalization error upper bounds for both ID and OOD settings, and the error bound grows with the level of noise.

Our experiments serve as empirical support for the theoretical claims above. We begin with controlled data distributions to mitigate the complexity inherent in real-world training and enable precise mechanistic analysis. We demonstrate that CoT training accelerates convergence and substantially improves generalization, extending it from ID to both ID and OOD settings (Section \ref{subsec:cot vs. nocot}). This confirms that CoT enables compositional generalization, whereas non-CoT models only rely on fixed input-output mappings. Next, Section \ref{sec:noise} shows that CoT generalizes robustly to noisy reasoning steps within a tolerable noise level, underscoring the importance of reliable compositional generalization. Finally, validation on real-world datasets (e.g., math word problems) shows that learned skills and compositional patterns remain applicable in complex scenarios.

$\bullet$ \textbf{Internal Circuits of CoT vs. Non-CoT Training} (Section \ref{sec:circuit}, for \textbf{Q2}). Using logit lens and causal tracing experiments, we find that CoT training induces a two-stage compositional circuit characterized by sparse token dependencies. This structure \textbf{reflects the process of composition}: the \textit{model progressively internalizes the reasoning process—each explicit reasoning step during training maps onto a distinct stage within the circuit}. Conversely, \citet{wang2024grokking} show that similar circuits emerge only during ID generalization, suggesting that without CoT, reasoning steps are not systematically internalized. Interestingly (Figure \ref{figure:circuit change}), CoT training allows the intermediate result to be extracted from a lower layer (e.g., index 3) for ID examples, whereas Non-CoT Training requires a higher layer (e.g., index 5). \textit{Intuitively, a smaller layer index implies that more layers remain available for processing the second reasoning step, potentially enhancing model performance.}

\textbf{Main contributions}. Briefly, we establish a theoretical and structural framework for understanding the generalization of CoT training. A key point is that CoT training helps models \textbf{learn how to think}—by enabling compositional reasoning—\textbf{not just what to think}, by simply providing correct answers. These findings offer insights into CoT strategies for LLMs to achieve robust generalization.
\section{Theoretical Analysis}\label{sec:theory}
\subsection{Preliminaries}
To begin with, we formally introduce the data distribution setting, clarify the distinctions between ID and OOD generalization in the context of \textcolor{deepgreen}{\textbf{compositional reasoning tasks}}, and give the definition of expected generalization error. These form the basis for our subsequent theoretical analysis.
\begin{definition}[Data Distribution]\label{definition1} 
Let $X$ and $Y$ denote the input and output random variables, respectively. The data distribution and its underlying relationships are characterized on the conditional distribution $P(Y \mid X)$.  Let CoT be a sequence $C=(C_1,C_2,\cdots,C_K)$, where each $C_k$ denotes the reasoning state at the $k$-th step, then we can get the following decomposition:
\begin{align*}
 P(Y \mid X)= \sum_{C}P(Y|X,C)P(C|X) =\sum_{C}P(Y|X,C)\prod_{k=1}^K P(C_k|C_{<k},X),  
\end{align*}
where $P(C_k|C_{<k},X)$ is the probability of the $k$-th reasoning step given all previous steps and the input. Each unique sequence of $C$ corresponds to a distinct compositional pattern. \textbf{ID and OOD are thus defined by whether these patterns are shared with the training data or are novel and unseen}. Representative examples are provided in Appendix \ref{app:example}.
\end{definition}
Recently, information-theoretic generalization bounds \citep{xu2017information,russo2019dataexploration,steinke2020reasoning,wang2022facets,wang2024generalization} have been introduced to analyze the expected generalization error of learning algorithms. A key benefit of these bounds is that they depend not only on the data distribution but also on the specific algorithm, making them an ideal tool for studying the generalization behavior of models trained using particular algorithms.
\begin{definition}[Expected Generalization Error (informal)]\label{def:informal} From a traditional statistical learning perspective, the expected generalization error is defined as the difference between the population risk
and the empirical risk. Intuitively, the empirical risk reflects performance on the training data, while bounds on the generalization error allow us to estimate the population risk on unseen test data. We denote the expected generalization error by $\widetilde{\text{error}}$, see Appendix \ref{app:def_error} for a detailed definition.
\end{definition}

\subsection{Information-Theoretic Generalization Bounds}

In this part, $P_{\text{train}}(Y|X)$ and $P_{\text{test}}(Y|X)$ denote the conditional distributions of the training and test data, respectively. Based on this, we provide a theoretical analysis of the divergence between the training and test distributions, along with the resulting generalization error bounds. 

Since compositional patterns in ID test data (and training data) \textbf{do not appear} in OOD test data, we make a plausible assumption\footnote{Linear combining rule is a good starting point for  target mixture distribution \citep{Mansour2008DAMS,fang2022is,tao2023nonparametric,lu2024learning,ahn2025gaussian}.} about \( P_{\text{test}}(Y|X) \):
\begin{assumption}[Mixed Test Distribution]\label{assumption1}
Assume \( P_{\text{test}}(Y|X) \) is a mixture of two distributions with disjoint support: \( P_{\text{test}}^{\text{ID}}(Y|X) \) and \( P_{\text{test}}^{\text{OOD}}(Y|X) \). Then
\[
P_{\text{test}}(Y|X) = (1-\alpha) P_{\text{test}}^{\text{ID}}(Y|X) + \alpha P_{\text{test}}^{\text{OOD}}(Y|X),
\]
where \( \alpha \in [0, 1] \) is the mixing coefficient of the OOD data.
\end{assumption}
Consequently, the distributional divergence between the test and training sets can be assessed quantitatively, which helps us characterize the expected generalization error as follows:
\begin{theorem}[Generalization Bounds via Distributional Divergence]\label{main:theorem1}
Under the conditions specified in Assumption \ref{assumption1} and Definition \ref{def:informal}, we assume that the loss $\ell(w, Z)$ is $R$-subGaussian for any $w\in \mathcal{W}\in \mathbb{R}^d$, then the expected generalization error is bounded by:
\begin{align*}
    \widetilde{\text{error}} \leq \sqrt{\frac{2R^2}{N}\left[ (1-\alpha)D_{\text{KL}}(P^{\text{ID}}_{\text{test}}(Y|X)||P_{\text{train}}(Y|X))+\alpha D_{\text{KL}}(P^{\text{OOD}}_{\text{test}}(Y|X)||P_{\text{train}}(Y|X))\right]},
\end{align*}
where $N$ is the training data size, $Z=(X,Y)$ and $\mathcal{W}$ is the space of hypotheses related to the model. $\alpha$ denotes the mixing coefficient of the OOD data in test distribution and $D_{\text{KL}}(\cdot)$ represents the KL divergence between ID/OOD test and training distributions.
\end{theorem}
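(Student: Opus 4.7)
The plan is to combine the information-theoretic generalization framework of Xu and Raginsky with the mixture structure of the test distribution from Assumption~\ref{assumption1}. Starting from the expected generalization error $\widetilde{\text{error}} = \mathbb{E}[L_{\text{test}}(W) - L_{\text{train}}(W)]$, I would first exploit the $R$-subGaussian assumption on $\ell(w, Z)$ to obtain a change-of-measure inequality. Since averaging $N$ i.i.d.\ $R$-subGaussian losses yields an $(R/\sqrt{N})$-subGaussian empirical mean, the Donsker--Varadhan variational representation of KL divergence gives, for any distributions $Q$ and $P$ on $Z$ and any $w$,
$$\mathbb{E}_Q[\bar{\ell}(w,\cdot)] - \mathbb{E}_P[\bar{\ell}(w,\cdot)] \leq \sqrt{\tfrac{2R^2}{N}\, D_{\text{KL}}(Q\|P)}.$$
This is the canonical single-distribution bound that I would instantiate with $Q = P_{\text{test}}$ and $P = P_{\text{train}}$.

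Next I would bring in Assumption~\ref{assumption1} to split the test expectation into its ID and OOD pieces. By linearity of expectation,
$$\mathbb{E}_{P_{\text{test}}}[\ell] - \mathbb{E}_{P_{\text{train}}}[\ell] = (1-\alpha)\bigl(\mathbb{E}_{P^{\text{ID}}_{\text{test}}}[\ell] - \mathbb{E}_{P_{\text{train}}}[\ell]\bigr) + \alpha\bigl(\mathbb{E}_{P^{\text{OOD}}_{\text{test}}}[\ell] - \mathbb{E}_{P_{\text{train}}}[\ell]\bigr).$$
Applying the change-of-measure bound to each of the two expectation differences separately, and then invoking the concavity of $t \mapsto \sqrt{t}$ together with Cauchy--Schwarz (or, equivalently, the joint convexity of KL divergence, which gives $D_{\text{KL}}(P_{\text{test}}\|P_{\text{train}}) \leq (1-\alpha) D_{\text{KL}}(P^{\text{ID}}_{\text{test}}\|P_{\text{train}}) + \alpha D_{\text{KL}}(P^{\text{OOD}}_{\text{test}}\|P_{\text{train}})$), I would assemble the two pieces under a single square root to obtain the weighted-KL form in the theorem statement.

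The final step is to reconcile the joint-distribution KL divergences produced by the Donsker--Varadhan argument with the \emph{conditional} divergences $D_{\text{KL}}(P(Y|X)\|Q(Y|X))$ that appear in the theorem. I would do this via the chain rule $D_{\text{KL}}(P(X,Y)\|Q(X,Y)) = D_{\text{KL}}(P(X)\|Q(X)) + \mathbb{E}_{P(X)}[D_{\text{KL}}(P(Y|X)\|Q(Y|X))]$, noting that the prompt marginal in Section~\ref{sec:pre} is shared across the training and ID/OOD test splits (since the input tokens $e_1, r_1, r_2$ are drawn from the same alphabet, with only the relation-composition pattern distinguishing ID from OOD), so the $D_{\text{KL}}(P(X)\|Q(X))$ contribution vanishes and only the conditional term remains. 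I anticipate that this reduction of joint to conditional KL will be the main subtle step: one must justify the shared-marginal assumption carefully from the data-construction procedure, and either absorb any residual input-marginal mismatch into $R$ or argue it is zero under the controlled setup. The subGaussian concentration, the factor $2R^2/N$, and the passage from additive to square-root form are standard machinery once the above decomposition is in place.
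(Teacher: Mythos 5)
Your proposal reaches the theorem by the same top-level route as the paper: apply the subGaussian/Donsker--Varadhan change-of-measure bound (the paper's Lemma~\ref{app:lemma2}, with $g(s,w)=\frac{1}{N}\sum_i \ell(w,Z_i)$ being $R/\sqrt{N}$-subGaussian) to obtain $\widetilde{\text{error}} \le \sqrt{(2R^2/N)\,D_{\text{KL}}(P_{\text{test}}\|P_{\text{train}})}$, then split the mixture. Where you differ is the micro-step decomposing the KL: you propose either a pointwise split of the expectation followed by Jensen/concavity of $\sqrt{\cdot}$, or convexity of KL in its first argument, to get $D_{\text{KL}}(P_{\text{test}}\|P_{\text{train}}) \le (1-\alpha)D_{\text{KL}}(P^{\text{ID}}_{\text{test}}\|P_{\text{train}})+\alpha D_{\text{KL}}(P^{\text{OOD}}_{\text{test}}\|P_{\text{train}})$ directly. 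The paper instead proves an \emph{exact} identity (Lemma~\ref{main:lemma1}) $D_{\text{KL}}(P_{\text{test}}\|P_{\text{train}}) = -H(\alpha) + (1-\alpha)D_{\text{KL}}(P^{\text{ID}}_{\text{test}}\|P_{\text{train}}) + \alpha D_{\text{KL}}(P^{\text{OOD}}_{\text{test}}\|P_{\text{train}})$, which uses the disjoint-support assumption to replace $P_{\text{test}}$ by $(1-\alpha)P^{\text{ID}}_{\text{test}}$ or $\alpha P^{\text{OOD}}_{\text{test}}$ inside the logarithm, and then drops $-H(\alpha)\le 0$. Your route is marginally more general (it does not need disjoint support), the paper's is marginally more informative (it exhibits the $-H(\alpha)$ slack), and both land on the same bound. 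On your last paragraph: you are right that the change-of-measure step lives over the joint law of $Z=(X,Y)$ while the theorem is written with conditional KL, and the paper does not explicitly reconcile the two. Your chain-rule instinct is sound, but the premise that the $X$-marginal is shared is not faithful to the setup: $S^{(2)}_{\text{ID}}$ and $S^{(2)}_{\text{OOD}}$ are built from \emph{disjoint} relation compositions, so the input marginals genuinely differ and the marginal-KL term need not vanish. The most defensible reading is that the conditional KL in the statement denotes the $X$-averaged conditional divergence (as in the notation of Lemma~\ref{main:lemma1}) and the marginal contribution is left implicit; the paper is informal at exactly this point, so your flagged concern is legitimate rather than a flaw in your own plan.
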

\begin{remark}[Theoretical Analysis of ID/OOD Generalization with/without CoT]\label{main:remark1}
Theorem \ref{main:theorem1} shows the generalization error naturally decomposes into ID and OOD components with coefficient $\alpha$. \ding{182} For ID, \textbf{the compositional patterns $C$ in the test problems exactly match those encountered during training}, since the ID test data comprises instances derived from known compositions in training data. Thus, under sufficient training—with or without CoT—the ID generalization error approaches zero (i.e., $D_{\text{KL}}(P^{\text{ID}}_{\text{test}}||P_{\text{train}})=0$), consistent with the results in Figure \ref{figure:cot vs noncot} (Left Part). \ding{183} For OOD, considering the data generation process, we have: $P(Y|X)=\sum_{C}P(Y|X,C)P(C|X)$, where $C$ is the CoT reasoning sequence: (a) when trained without CoT (Figure \ref{figure:cot vs noncot} (Left)), the model does not explicitly learn the distribution $P(C|X)$ (i.e., $C$ never appears during training), the $P(C|X)$ becomes a uniform prior, making $P(Y|X,C)$ reduce to $P(Y|X)$. Moreover, since OOD test problems involve unseen compositional patterns, training without CoT struggles to generalize to OOD settings. (b) In contrast, with CoT training, both $P(C|X)$ and $P(Y|X,C)$ are explicitly modeled (\textbf{that is, the learned, simpler skills}), aligning precisely with the two stages of the compositional circuit (Figure \ref{figure:circuit change}, Right). CoT training enables models to generalize near-perfectly to OOD examples by exposing them to simpler sub-tasks during training (Figure \ref{figure:cot vs noncot} (Center Right)). When the compositional structure aligns between training and OOD tasks, the model can effectively apply its learned reasoning skills. CoT training \textbf{teaches models how to think through compositional reasoning}—not merely what to think (matching the answers). For specifics, refer to Theorem \ref{main:theorem2}.
\end{remark}
\begin{theorem}[OOD Generalization Error for Training with CoT]\label{main:theorem2}  Let the conditions specified in Theorem \ref{main:theorem1} hold, and assume\footnote{This correlates to what architectures and tasks we are considering, see Figure \ref{figure:cot vs noncot} and Section \ref{sec:pre} for details.} sufficient training such that $D_{\text{KL}}(P^{\text{ID}}_{\text{test}}||P_{\text{train}})\rightarrow0$. Define $P(Y|X)=\sum_{C}P(Y|X,C)P(C|X)$, then the OOD generalization error is bounded by:
\begin{align*}
\widetilde{\text{error}}^2 &\leq \frac{2R^2\alpha}{N} [D_{\text{KL}}(P^{\text{OOD}}_{\text{test}}(C|X)||P_{\text{train}}(C|X))\\ &+\mathbb{E}_{C\sim P^{\text{OOD}}_{\text{test}}(C|X)}[D_{\text{KL}}(P^{\text{OOD}}_{\text{test}}(Y|X,C)||P_{\text{train}}(Y|X,C))]],
\end{align*}
where $C$ denotes the CoT intermediate reasoning steps, see Appendix \ref{app:extend theorem1} for more extensions.
\end{theorem}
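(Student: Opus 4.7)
The plan is to reduce Theorem~\ref{main:theorem2} to Theorem~\ref{main:theorem1} by (i) using the hypothesis on sufficient training to kill the ID term and (ii) rewriting the remaining OOD KL term via the chain rule of relative entropy applied to the latent CoT variable $C_i$. First, I would square the bound in Theorem~\ref{main:theorem1} to work with $\widetilde{\text{error}}^{2}$, then plug in $D_{\text{KL}}(P^{\text{ID}}_{\text{test}}(Y|X)\|P_{\text{train}}(Y|X))\to 0$ to obtain the intermediate inequality
\begin{equation*}
\widetilde{\text{error}}^{2}\;\leq\;\frac{2R^{2}\alpha}{N}\,D_{\text{KL}}\bigl(P^{\text{OOD}}_{\text{test}}(Y|X)\,\big\|\,P_{\text{train}}(Y|X)\bigr).
\end{equation*}

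Second, I would introduce the latent reasoning variable $C_i$ by treating both $P^{\text{OOD}}_{\text{test}}(Y|X)$ and $P_{\text{train}}(Y|X)$ as $C_i$-marginals of joint conditionals $P(Y,C_i|X)=P(C_i|X)\,P(Y|X,C_i)$, as permitted by the decomposition $P(Y|X)=\sum_i P(Y|X,C_i)P(C_i|X)$ given in the theorem statement. The data processing inequality (equivalently, the log-sum inequality applied to the marginalization over $C_i$) then yields
\begin{equation*}
D_{\text{KL}}\bigl(P^{\text{OOD}}_{\text{test}}(Y|X)\,\big\|\,P_{\text{train}}(Y|X)\bigr)\;\leq\;D_{\text{KL}}\bigl(P^{\text{OOD}}_{\text{test}}(Y,C_i|X)\,\big\|\,P_{\text{train}}(Y,C_i|X)\bigr).
\end{equation*}

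Third, I would apply the chain rule of KL divergence to the joint on the right-hand side, factoring out the marginal over $C_i$ and the conditional over $Y$ given $(X,C_i)$, giving
\begin{equation*}
D_{\text{KL}}\bigl(P^{\text{OOD}}_{\text{test}}(C_i|X)\,\big\|\,P_{\text{train}}(C_i|X)\bigr)\;+\;\mathbb{E}_{C_i\sim P^{\text{OOD}}_{\text{test}}(C_i|X)}\!\bigl[D_{\text{KL}}\bigl(P^{\text{OOD}}_{\text{test}}(Y|X,C_i)\,\big\|\,P_{\text{train}}(Y|X,C_i)\bigr)\bigr].
\end{equation*}
Chaining the three inequalities delivers the claimed bound.

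The main obstacle is a modeling rather than computational one: to invoke the chain rule and the data processing inequality, one must first commit to a common sample space and ensure absolute continuity $P^{\text{OOD}}_{\text{test}}(\cdot|X)\ll P_{\text{train}}(\cdot|X)$ for both the $C_i$-marginal and the $Y$-conditional, so that the KL divergences are finite. This is exactly where the CoT-training assumption enters substantively: because CoT supervision forces the model to observe every atomic subtask during training, $P_{\text{train}}(C_i|X)$ and $P_{\text{train}}(Y|X,C_i)$ have support covering the OOD step-wise decompositions, in contrast to the non-CoT case where $P_{\text{train}}(C_i|X)$ degenerates to a uniform prior as noted in Remark~\ref{main:remark1}. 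I would make this absolute-continuity condition explicit at the start, after which both the marginalization inequality and the chain rule are standard information-theoretic identities and the remainder of the argument is bookkeeping.
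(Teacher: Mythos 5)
Your proposal is correct and follows essentially the same route as the paper: kill the ID term via Corollary~\ref{corollary1}, then bound $D_{\text{KL}}(P^{\text{OOD}}_{\text{test}}(Y|X)\|P_{\text{train}}(Y|X))$ by the joint KL over $(Y,C_i)$ and factorize. The paper carries out your steps~(ii) and~(iii) in a single hand computation via the log-sum inequality rather than naming the data-processing inequality and chain rule separately, but the underlying estimate is identical; your explicit remark about absolute continuity of the $C_i$-marginal and $Y$-conditional is a sensible technical hygiene point that the paper leaves implicit.
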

\begin{remark}[Robustness Discussion of Training with CoT]\label{main:remark2}
For CoT training in the presence of erroneous reasoning steps, let the distribution after adding noise to $P_{\text{train}}(Y|X)$ be denoted as $P_{\text{new}}(Y|X)$. Briefly, adding noise leads to an increase in both $D_{\text{KL}}(P^{\text{ID}}_{\text{test}}||P_{\text{new}})(>D_{\text{KL}}(P^{\text{ID}}_{\text{test}}||P_{\text{train}}))$ and $D_{\text{KL}}(P^{\text{OOD}}_{\text{test}}||P_{\text{new}})(>D_{\text{KL}}(P^{\text{OOD}}_{\text{test}}||P_{\text{train}}))$. This corresponds to an increase in the upper bounds of the generalization error for both ID and OOD cases, and the larger the noise ratio $\xi$, the higher the error bounds. A more detailed theoretical explanation is provided in Theorem \ref{app:theorem2} (Appendix \ref{app:sec theorem2}).
\end{remark}

\section{Systematic Generalization via a Two-Stage Circuit}
This section further illustrates the compositional generalization, with our experiments serving as empirical support for the aforementioned theoretical claims. We first show that CoT significantly accelerates convergence and improves generalization compared to training without it (Section \ref{subsec:cot vs. nocot}), then examine the internal mechanisms of CoT training, focusing on a two-stage compositional circuit (Section \ref{sec:circuit}). Next, we analyze the robustness of CoT training in the presence of erroneous reasoning steps (Section \ref{sec:noise}). Finally, we validate our findings on real-world datasets (Section \ref{sec:real}).

\subsection{General Setup}\label{sec:pre}
Here, we describe the core components of our study by reviewing some basic notations. Briefly, we have the model learn all atomic facts as ``simpler learned skills'', while treating multi-hop facts as “complex problems”. Relational patterns $(r_1,\cdots,r_n)$ are used to simulate compositional patterns.

\textbf{Atomic and Multi-Hop Facts.} The atomic (one-hop) fact  
  describes two entities and the relationship between them, which can be represented as a triplet $(e_1,r_1,e_2)$. For example, “The United States’ capital is Washington, D.C.” More specifically, $e_1$ refers to the head entity (e.g., the United States), 
$r_1$ is the relation (e.g., Capital), and $e_2$ refers to the tail entity (e.g., Washington, D.C.). Based on atomic facts, a two-hop fact can be derived by combining two atomic facts, such as $(e_1,r_1,e_2)\oplus(e_2,r_2,e_3)\Longrightarrow (e_1,r_1,r_2,e_3)$, where $e_2$ serves as a bridge entity. Similarly, a multi-hop fact can be constructed recursively from more atomic facts, resulting in $(e_1,r_1,e_2)\oplus(e_2,r_2,e_3)\oplus...\oplus(e_n,r_n,e_{n+1})\Longrightarrow (e_1,r_1,r_2,...,r_n,e_{n+1})$, where $n$ is the number of steps. In real-world scenarios, $e_2$ corresponds to the intermediate inference in a two-hop fact, while $e_2,e_3,...,e_{n}$ are intermediate results in more complex multi-step reasoning.

\textbf{Training Data.} Firstly, following \citet{wang2024grokking}, we define the sets of entities $\mathcal{E}$ and relations $\mathcal{R}$, from which the set of atomic facts is constructed as $ \mathcal{S} = \{(e_1, r_1, e_2)| e_1,e_2 \in \mathcal{E}, r_1 \in \mathcal{R}\}$. Our training set of atomic facts, denoted as $S$, is sampled from $\mathcal{S}$ (i.e., $S \subset \mathcal{S}$). Then, we partition $S$ into two subsets, $S_{\text{ID}}$ and $S_{\text{OOD}}$ ($S_{\text{ID}} \cup S_{\text{OOD}}=S, S_{\text{ID}}\cap S_{\text{OOD}}=\emptyset$),  which are used to form two sets of two-hop facts, $S_{\text{ID}}^{(2)}$ and $S_{\text{OOD}}^{(2)}$, where $S_{\text{ID}}^{(2)}=\{(e_1,r_1,r_2,e_3)|(e_1,r_1,e_2),(e_2,r_2,e_3)\in S_{\text{ID}}, e_1 \neq e_3\}$, and $S_{\text{OOD}}^{(2)}$ is formed in a similar manner.  To evaluate the model's ID and OOD generalization ability, the training dataset $T$ excludes $S_{\text{ID}_{\text{test}}}^{(2)},S_{\text{OOD}}^{(2)}$, that is, $T = S_{\text{ID}}  \cup  S_{\text{OOD}} \cup S_{\text{ID}_{\text{train}}}^{(2)}$, where $S_{\text{ID}_{\text{train}}}^{(2)} \cup S_{\text{ID}_{\text{test}}}^{(2)} = S_{\text{ID}}^{(2)}$.  Notice that $S_{\text{ID}}\cap S_{\text{OOD}}=\emptyset$, so $S_{\text{ID}}^{(2)}\cap S_{\text{OOD}}^{(2)}=\emptyset$ and the relation compositions $(r_i, r_j)$ in $S_{\text{ID}}^{(2)}$ will not appear in $S_{\text{OOD}}^{(2)}$. Specifically, $S_{\text{ID}_{\text{train}}}^{(2)}$ and $S_{\text{ID}_{\text{test}}}^{(2)}$ are disjoint subsets whose relation compositions are sampled from the same space, enabling \textcolor{deepgreen}{evaluation on unseen instances within known compositions}. In contrast, $S_{\text{OOD}}^{(2)}$ involves entirely novel relation compositions, drawn from a disjoint set of relation types, thus \textcolor{deepgreen}{testing generalization to unseen compositional patterns}. Let $\lambda=|S_{\text{ID}_{\text{train}}}^{(2)}|/|S_{\text{ID}}|$, where $|\text{set}|$ denotes the number of samples in the set. \textbf{More ID/OOD evaluation details and representative examples are provided in Appendix \ref{app:example}}.

\textbf{Training without/with CoT.} For atomic facts $(S=S_{\text{ID}} \cup S_{\text{OOD}})$, training and evaluation are performed by having the model predict the final tail entity ($e_1, r_1 \xrightarrow{\text{predict}} \hat{e}_2, \forall (e_1,r_1,e_2)\in S$, $\hat{e}_2$ is the prediction of $e_2$, $\text{input} \xrightarrow{\text{predict}}\text{output}$). As for two-hop facts, we consider whether to use CoT annotations during training ($(e_1,r_1,r_2,e_3)\in S_{\text{ID}_{\text{train}}}^{(2)}$). (1) Training without CoT: $e_1, r_1, r_2 \xrightarrow{\text{predict}} \hat{e}_3$, only predict the final tail entity $e_3$. (2) Training with CoT: $e_1, r_1, r_2 \xrightarrow{\text{predict}} \hat{e}_2 \text{ and } e_1, r_1, r_2, \hat{e}_2 \xrightarrow{\text{predict}} \hat{e}_3$, predict both the bridge entity $e_2$ and the final tail entity $e_3$. 

\begin{definition}[Training without/with CoT] Let $\hat{e}$ denotes the output generated by the model $\mathcal{M}$, which is different from the ground truth. Taking two-hop facts $(e_1,r_1,r_2,e_3)$ as an example, the test loss is defined as $\mathbb{E}_{S^{(2)}_{\text{test}}}[\mathcal{L}(e_3,\mathcal{M}(e_1,r_1,r_2))]$, where $\mathcal{L}$ denotes the loss function. For training, (i) Training without CoT, the loss is $\mathbb{E}_{S^{(2)}_{\text{train}}}[\mathcal{L}(e_3,\mathcal{M}(e_1,r_1,r_2))]$; (ii) Training with CoT, the loss is $\mathbb{E}_{S^{(2)}_{\text{train}}}[\mathcal{L}(e_3,\mathcal{M}(e_1,r_1,r_2,\hat{e}_2))+\mathcal{L}(e_2,\mathcal{M}(e_1,r_1,r_2))]$. \textcolor{deepgreen}{In our analysis, we use the next-token prediction via autoregression \citep{sander2025towards,li2025on,bachmann2024the} rather than next-token prediction via teacher-forcing \citep{bachmann2024the}}.
\end{definition}
\subsection{CoT Training Boosts Both ID and OOD Generalization}\label{subsec:cot vs. nocot}
Although chain-of-thought (CoT) reasoning can improve task performance \citep{lake2018generalization,wei2022chain,wang2022iteratively,zelikman2022star,liu2023crystal}, it is absent during large-scale (pre-)training, when core model capabilities are developed \citep{li2020train,zhou2023lima}.  Prior work has extensively examined whether transformer-based language models can perform implicit composition, consistently reporting negative results \citep{press2023measuring,yang2024largelanguagemodelslatently,zhu2024towards}. Specifically, \textbf{for CoT prompting}, a persistent ``compositionality gap" exists \citep{press2023measuring}, where models often know all basic facts but fail to compose them—regardless of the model scale. \textbf{Regarding training without CoT}, \citet{wang2024grokking} further demonstrate that transformers can learn to perform implicit reasoning under ID generalization, but this ability does not transfer to OOD settings. It naturally raises the question: \textbf{how does using explicit reasoning steps during training impact generalization ability?}

\begin{figure}[ht]
  \centering
    \begin{subfigure}[b]{0.245\textwidth}
        \centering
        \includegraphics[height=3.55cm]{ 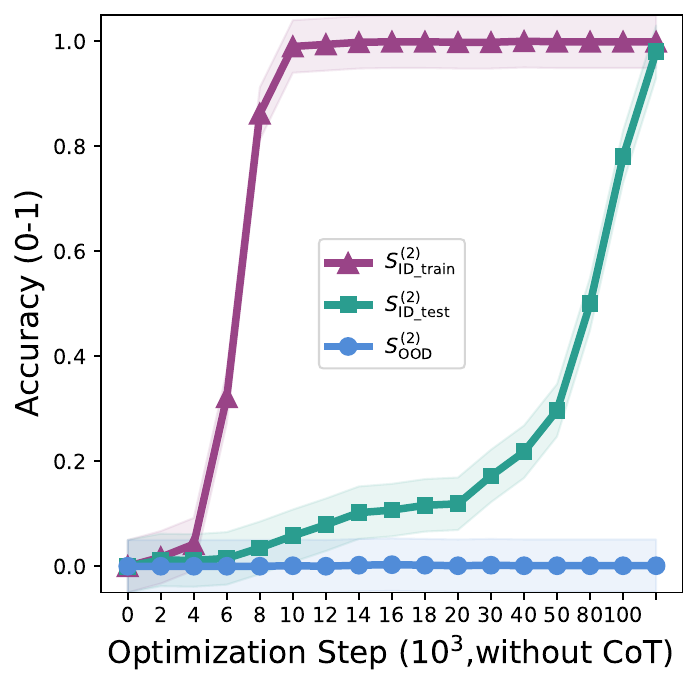}
    \end{subfigure}
    \begin{subfigure}[b]{0.245\textwidth}
        \centering
        \includegraphics[height=3.55cm]{ 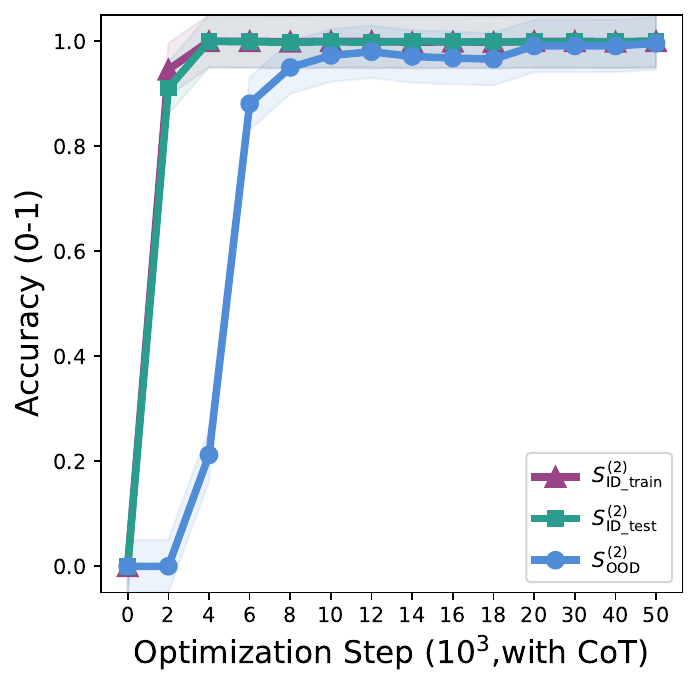}
    \end{subfigure}
    \begin{subfigure}[b]{0.245\textwidth}
        \centering
        \includegraphics[height=3.55cm]{ 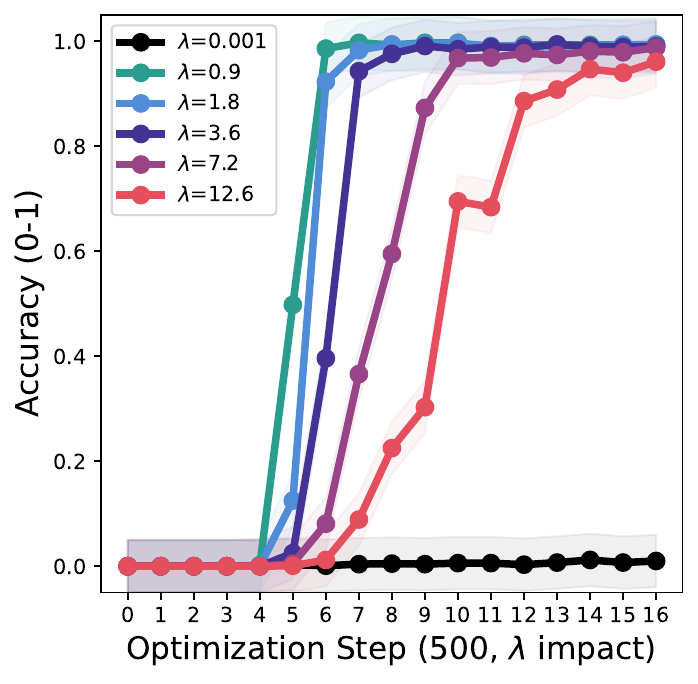}
    \end{subfigure}
    \begin{subfigure}[b]{0.245\textwidth}
        \centering
        \includegraphics[height=3.55cm]{ 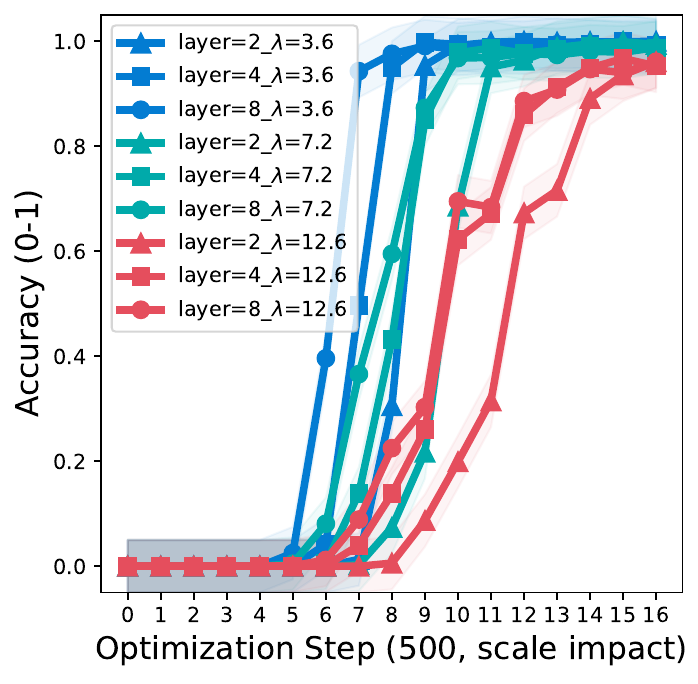}
    \end{subfigure} 
  \caption{The model generalization ability under controllable data settings. \textbf{Left Part:} the accuracy comparison on two-hop reasoning between training without CoT (Left) and training with CoT (Center Left), $\lambda=7.2$. CoT training significantly accelerates convergence and improves generalization from ID to OOD reasoning. \textbf{Right Part:} the impact of two-hop/one-hop ratio $\lambda$ (Center Right) and model scale (Right) on OOD generalization. Ratio $\lambda$ correlates with OOD generalization speed, while larger models converge more quickly without altering reasoning behavior.}
  \label{figure:cot vs noncot}
    \vskip -0.2in
\end{figure}

$\diamond$ \textbf{Controllable setup.} In order to control the data distribution, we utilize the data-generation process introduced in Section \ref{sec:pre}. Specifically, $S$ is constructed with 
$|\mathcal{E}|=2000$ entities and $|\mathcal{R}|=200$ relations (Appendix \ref{app:data_process}). 
 One-hop facts correspond to the $(e_1,r_1,e_2)$ triplets in $S$. These triplets are partitioned into two disjoint sets, that is, $S_{\text{ID}}$ (95\%) and $S_{\text{OOD}}$ (5\%), which are used to deduce the ID and OOD two-hop facts ($S_{\text{ID}}^{(2)},S_{\text{OOD}}^{(2)}$). The candidate set of $\lambda$ is $\{0.001,0.9,1.8,3.6,7.2,12.6\}$. Besides, the model employed is a standard decoder-only transformer as in GPT-2 \citep{radford2019language}, with a configuration of 8 layers, 768 hidden dimensions, and 12 attention heads, and the tokenization is done by having a unique token for each entity/relation for convenience\footnote{Details regarding the model, tokenization and optimization we used are provided in Appendix \ref{app:train details1}.}.

$\diamond$ \textbf{Results.} Figure \ref{figure:cot vs noncot} (Left Part) illustrates model accuracy on the training and testing two-hop reasoning tasks during optimization with $\lambda = 7.2$. (1) \textbf{Training without CoT} (Left). We mirror and observe the same phenomenon (grokking \citep{power2022grokking}) as \citet{wang2024grokking}, namely that the model eventually generalizes to ID examples ($S_{\text{ID}_{\text{test}}}^{(2)}$) only after extensive overtraining, while showing no OOD generalization ($S_{\text{OOD}}^{(2)}$) even after millions of steps, suggesting delayed and non-systematic generalization likely based on \textit{memorized patterns} (relation compositions). (2) \textbf{Training with CoT} (Center Left). Compared to training without CoT, incorporating CoT annotations significantly accelerates convergence and improves generalization. The model reaches near-perfect accuracy on $S_{\text{ID}_{\text{test}}}^{(2)}$ within $\sim4,000$ steps, and also shows substantial gains on $S_{\text{OOD}}^{(2)}$. This indicates that explicit reasoning guidance during training shifts the model from nonsystematic (only ID) to more systematic (ID \& OOD) generalization by enabling it to \textit{learn underlying reasoning patterns} (relation compositions).

Ablation studies are also conducted to examine how the two-hop/one-hop ratio ($\lambda$), model scale influence CoT training performance, with a focus on accuracy over the OOD test set. (i) An appropriate $\lambda$ accelerates OOD generalization. Figure \ref{figure:cot vs noncot} (Center Right) shows the OOD test accuracy across different $\lambda$, which is strongly correlated with generalization speed. \textit{A counter-intuitive finding is that a smaller ratio can expedite OOD generalization under CoT training\footnote{A key feature of reinforcement fine-tuning (RFT, used in OpenAI O1 models \citep{openai2024}) is its ability to train models for domain-specific tasks with minimal data, similar to our findings.}, though overly small values may impair the model's ability to learn relevant reasoning patterns.} Indeed, the model may undergo a process of memorization before gradually learning the underlying patterns. In other words, a higher ratio increases the complexity of the training set, requiring a longer memorization phase, but the model eventually learns the relevant reasoning patterns. (ii) In Figure \ref{figure:cot vs noncot} (Right), we run the  experiments with model layers $\in \{2, 4, 8\}$ and $\lambda \in \{3.6, 7.2, 12.6\}$, with other settings as in Figure \ref{figure:cot vs noncot} (Center Left). Scaling up the model does not qualitatively affect generalization, though larger models converge in fewer optimization steps, consistent with previous studies \citep{li2020train,tirumala2022memorization,wang2024grokking}. Additionally, we shift our focus to multi-hop scenarios in Appendix \ref{app:muti-hop}, examining whether a model trained solely on two-hop facts during the CoT training phase can generalize to multi-hop facts.

\subsection{Mechanisms of CoT Training: Internalizing Step-Wise Reasoning}\label{sec:circuit}
Up to this point, we have demonstrated that incorporating explicit CoT training in controlled experiments significantly enhances the model’s reasoning generalization ability—shifting it from ID generalization alone to encompassing both ID and OOD generalization. Key factors such as data distribution (e.g., ratio $\lambda$ and pattern) play a crucial role in shaping the model’s capacity for systematic generalization. \textbf{However, the internal mechanisms underlying these improvements remain unclear, which we aim to explore in this part}.

\textbf{Logit Lens \& Causal Tracing.} Logit lens is widely used for inspecting hidden states of LLMs \citep{dar2023analyzing,geva2023dissecting,katz2023visit,sakarvadia2023memory,yang2024largelanguagemodelslatently,wang2024grokking}. The key idea behind the logit lens is to interpret intermediate hidden states by projecting them into the output vocabulary space using the model’s output embedding matrix. We adopt the recent practice \citep{yang2024largelanguagemodelslatently,wang2024grokking}  where the activation first goes through the transformer’s final normalization layer before being multiplied by the output embedding. For causal tracing, the transformer can be interpreted as a causal graph \citep{pearl2009causality} that propagates information from the input to the output via a network of intermediate states, enabling various causal analyses of its internal computations \citep{vig2020investigating,meng2022locating,wang2023interpretability,hanna2023gpt2,feng2024binding,wang2024grokking}. Specifically, during the normal run, we intervene on the state of interest by replacing its activation with the activation from the perturbed run\footnote{If perturbing a node does not alter the target state (top-1 token through the logit lens), we prune the node. Please see method details in Appendix \ref{app:causal details}.}. We then proceed with the remaining computations and measure whether the target state (the top-1 token through the logit lens) is altered. For simplicity, we denote a hidden state as $E[\text{layer index, token position}]$.

\begin{figure}[htb]
    \begin{subfigure}[b]{0.39\textwidth}
        \includegraphics[height=4.75cm]{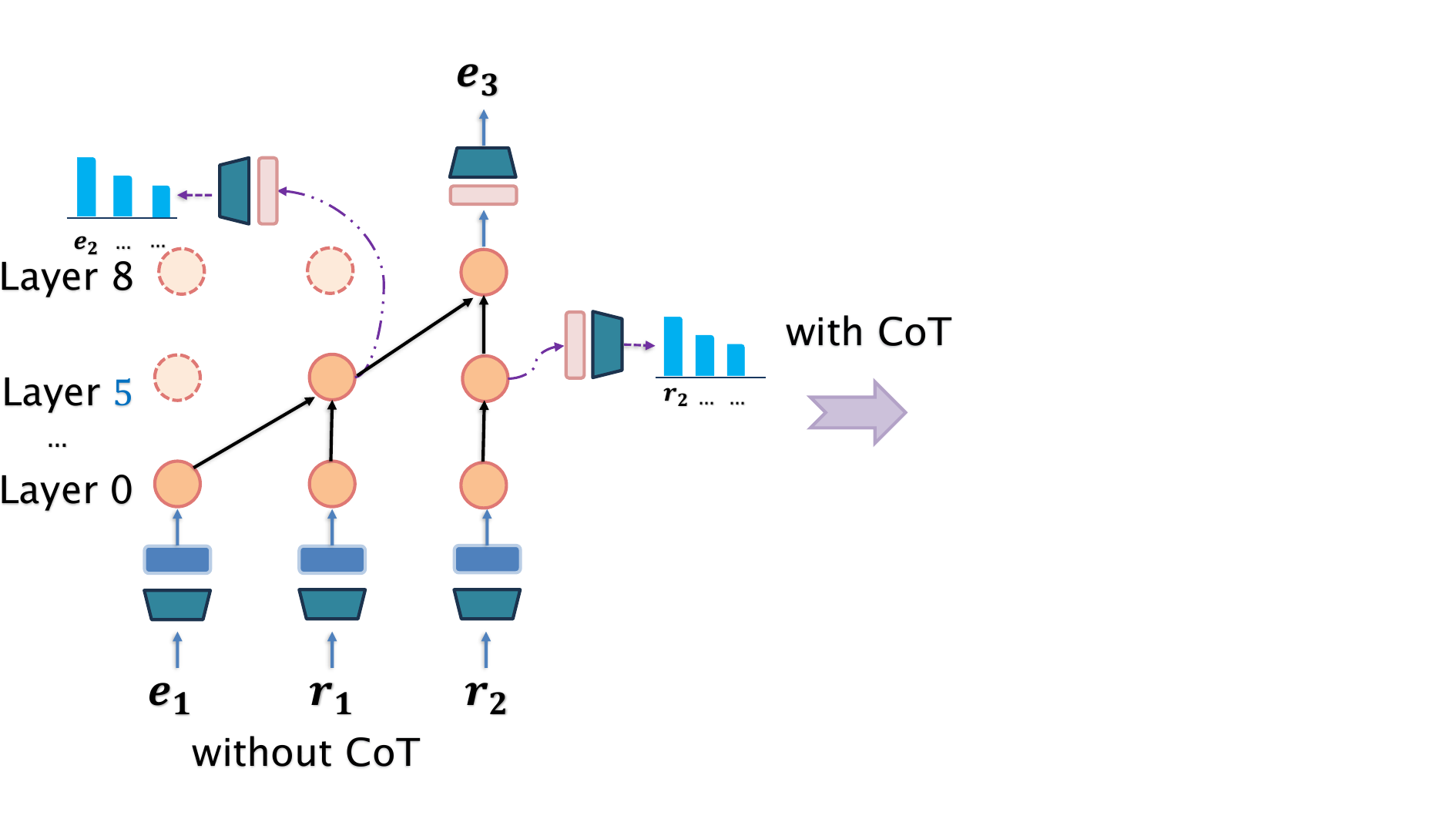}
    \end{subfigure}
    \hspace{-0mm}
    \begin{subfigure}[b]{0.6\textwidth}
    \centering
        \includegraphics[height=4.75cm]{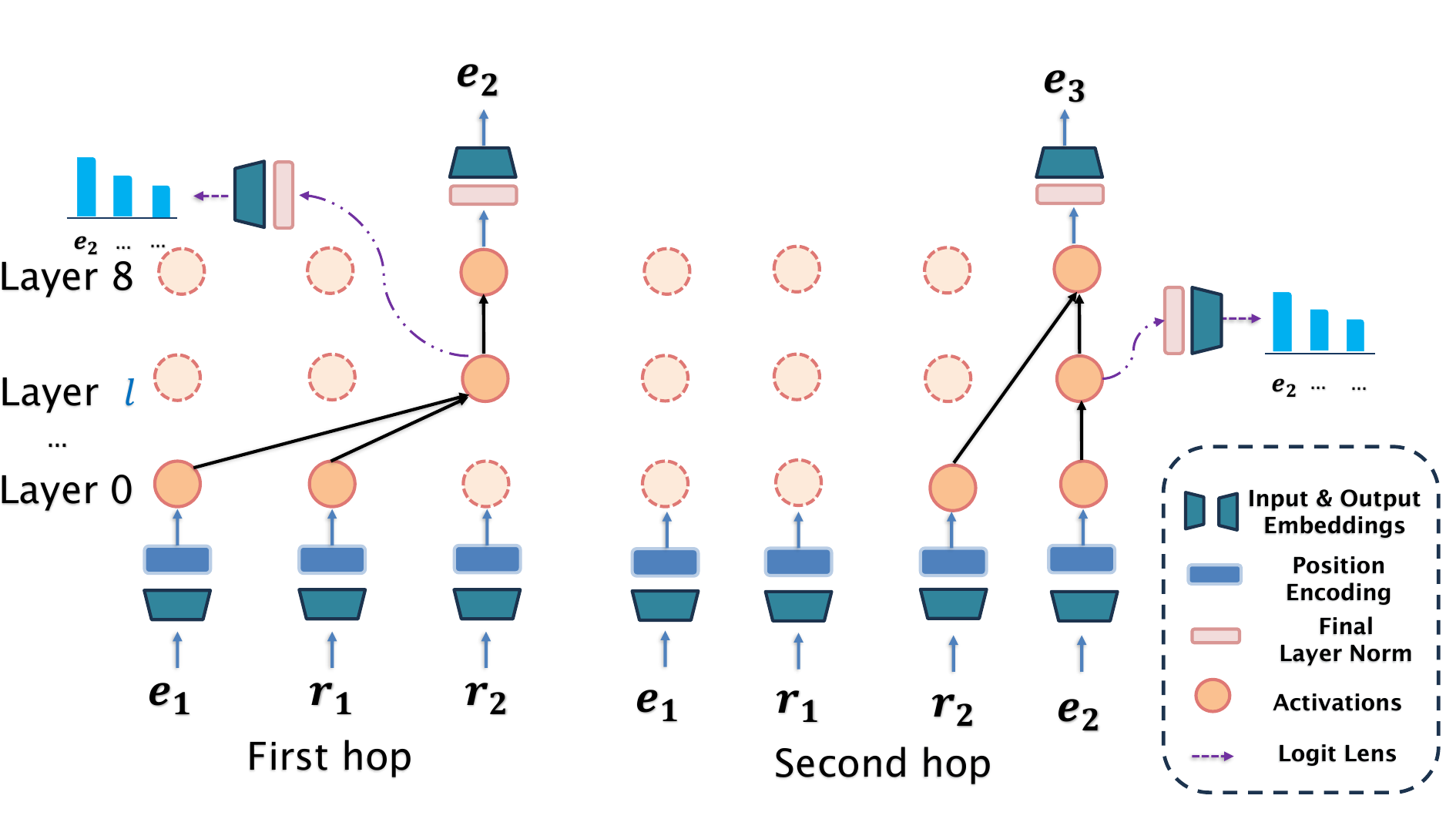}
    \end{subfigure}
  \caption{The compositional circuit (layer:8) for two-hop facts. We analyze individual states and assess the strength of connections between hidden states. \textbf{Left:} training without CoT, the circuit emerges only during ID generalization, with the intermediate result $e_2$ being resolved at $\text{layer index}=5$. \textbf{Right:} training with CoT, the model achieves ID/OOD generalization via a two-stage circuit (sparse sequential dependencies between input tokens). It is noted that intermediate result \(e_2\) is resolved at \(\text{layer index} = l\), where \( l = 3 \) for ID and \( l = 5 \) for OOD.}
  \label{figure:circuit change}
  \vskip -0.1in
\end{figure}

We investigate the inner workings of the model during generalization using two widely adopted interpretability methods: logit lens \citep{nostalgebraist2020interpreting} and causal tracing \citep{pearl2009causality}. Specifically, we examine hidden-state representations by projecting intermediate-layer activations into the vocabulary space through the model’s output embedding matrix. Besides, we assess the causal contribution of intermediate states by systematically replacing layer-wise representations, enabling us to identify the circuits responsible for generalization. Our analysis is conducted within the experimental setup described in Section~\ref{subsec:cot vs. nocot}. Here, we use $(e_1,r_1,e_2)\oplus(e_2,r_2,e_3)\Longrightarrow (e_1,r_1,r_2,e_3), \forall e_1,e_2,e_3 \in \mathcal{E}, \forall r_1,r_2 \in \mathcal{R}$ to represent two-hop facts, in order to more clearly compare training with CoT and without CoT.

\textbf{Briefly, the circuit reflects the composition process}: CoT-trained models learn to systematically compose (both ID and OOD generalization) simpler skills ($P(C|X)$ \& $P(Y|X,C)$), whereas those trained without CoT merely match patterns seen in the training data (only ID generalization).

$\diamond$ \textbf{The internal mechanisms of training with CoT: two-stage compositional circuit.} We perform a set of logit lens and causal tracing experiments after training with CoT. Figure \ref{figure:circuit change} (Right) illustrates the discovered compositional circuit, which represents the causal computational pathways after the 8-layer model achieves two-hop ID/OOD generalization. Specifically, we identify a highly interpretable causal graph consisting of states in layers 0, $l$, and 8, where weak nodes and connections have been pruned. The circuit exhibits two stages, consistent with the explicit reasoning steps in models during training. \ding{182} In the \textbf{first-hop stage}, layer $l$ splits the circuit into lower and upper parts: the lower parts retrieve the first-hop fact from the input $e_1,r_1,r_2$, store the bridge entity ($e_2$) in state $E[l,r_2]$; the upper parts pass the information of $e_2$ to output state $E[8,r_2]$ through the residual connections. Since the data distribution is controllable, $l$ can be precisely located (3 for $S_{\text{ID}_{\text{test}}}^{(2)}$, 5 for $S_{\text{OOD}}^{(2)}$).  \ding{183} In the \textbf{second-hop stage}, \textcolor{blue}{the auto-regressive model uses the $e_2$ generated in the first-hop stage}. This stage omits the $e_1,r_1$ and processes the second-hop from the input $e_1,r_1,r_2,e_2$ to store the tail $e_3$ to the output state $E[8,e_2]$. CoT training internalizes reasoning steps, with the number of reasoning circuit stages matching the number of explicit reasoning steps during training.

$\diamond$ \textbf{The limits of training without CoT for OOD.} For training without CoT, we do the same circuit analysis and reproduce the results of \citet{wang2024grokking}: the circuit emerges only during ID generalization. Layer 5 splits the circuit into lower and upper layers: the lower layers process the first-hop fact $(e_1, r_1, e_2)$ from the input $(e_1, r_1)$, storing the bridge entity $e_2$ in $E[5, r_1]$; the upper layers (5-8) gradually form the second-hop reasoning during extensive overtraining. Only when the intermediate result $e_2$ is resolved can the final result $e_1$ be correctly derived \citep{biran2024hoppinglateexploringlimitations}, traditional transformer cannot process subtasks in parallel. The difficulty in OOD generalization arises from semantic misalignment across layers, due to inconsistent token representations learned by the transformer \citep{wang2025reversal}.

$\diamond$ \textbf{How training with CoT mitigates transformer limits in OOD scenarios.} (1) As shown above, training with CoT allows the intermediate result $e_2$ to be extracted from a lower layer (index 3) for ID examples, whereas training without CoT requires a higher layer (index 5). \textit{From an intuitive perspective, a smaller layer index implies that more layers remain available for processing the second hop, which could lead to better performance}. We also demonstrate that a two-layer transformer is sufficient to learn two-hop compositional circuits from CoT training in Appendix \ref{app:2_circuit}. (2) CoT training explicitly processes each task through the model, allowing the model to independently learn each reasoning step. In brief, ID generalization is achievable in both CoT and non-CoT formats after sufficient training, but OOD generalization is challenging without CoT. \textit{CoT enforces the decomposition of subtasks, effectively bringing OOD data closer to the ID, thereby helping the model learn a systematic generalization circuit}. This aligns the theoretical analysis in Section \ref{sec:theory}.
\subsection{Robust Systematic Generalization through CoT Training}
\label{sec:noise}
In practice, not all CoT training data is manually annotated, which means erroneous reasoning steps may be present. We aim to understand the source of CoT training's robustness under such imperfect conditions. In this section, we extend our analysis to settings where erroneous reasoning steps exist in the training data.

\begin{figure}[ht]
  \centering
    \begin{subfigure}[b]{0.245\textwidth}
        \centering
        \includegraphics[height=3.55cm]{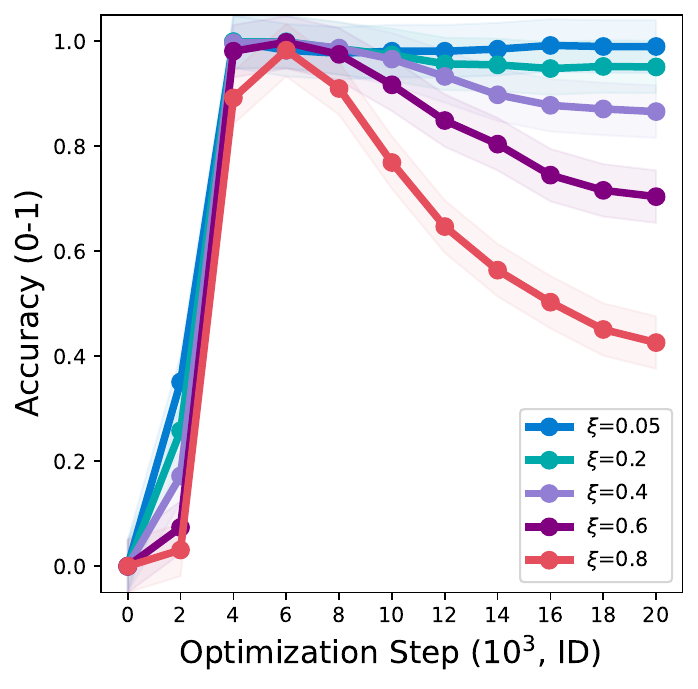}
    \end{subfigure}
    \begin{subfigure}[b]{0.245\textwidth}
        \centering
        \includegraphics[height=3.55cm]{ 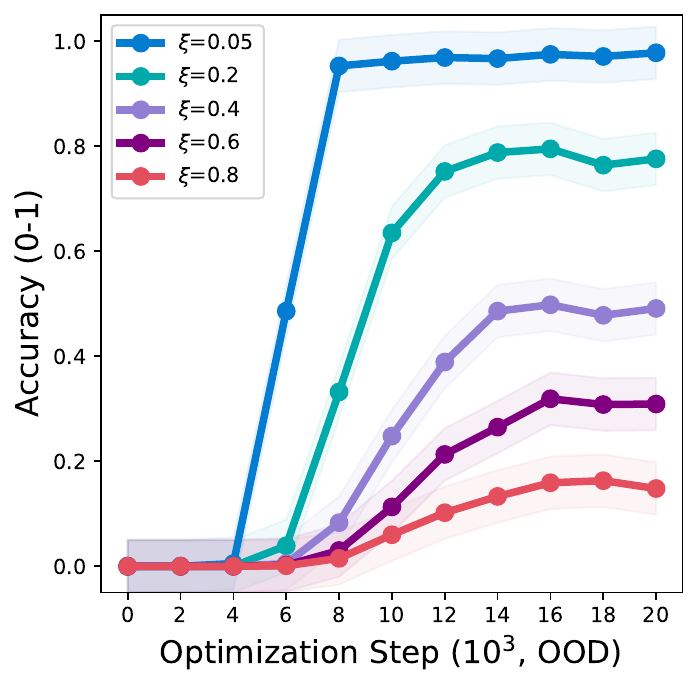}
    \end{subfigure}
    \begin{subfigure}[b]{0.245\textwidth}
        \centering
        \includegraphics[height=3.55cm]{ 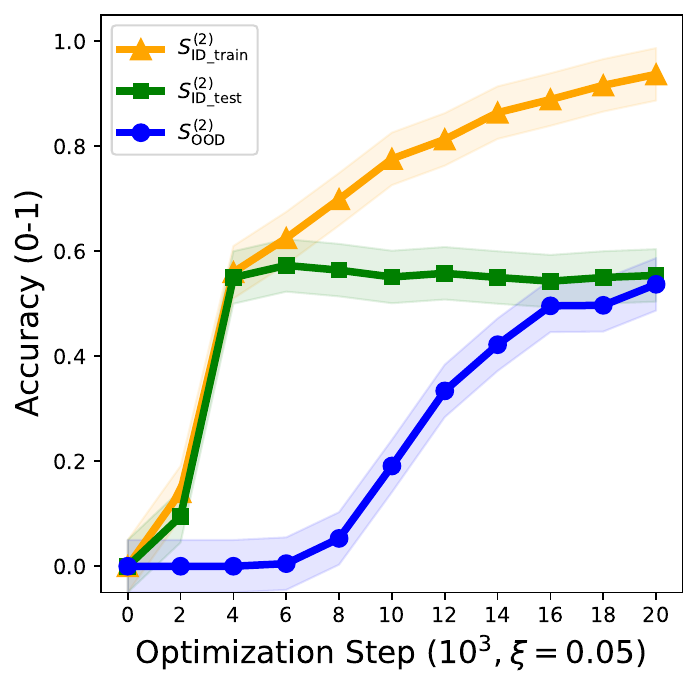}
    \end{subfigure}
    \begin{subfigure}[b]{0.245\textwidth}
        \centering
        \includegraphics[height=3.55cm]{ 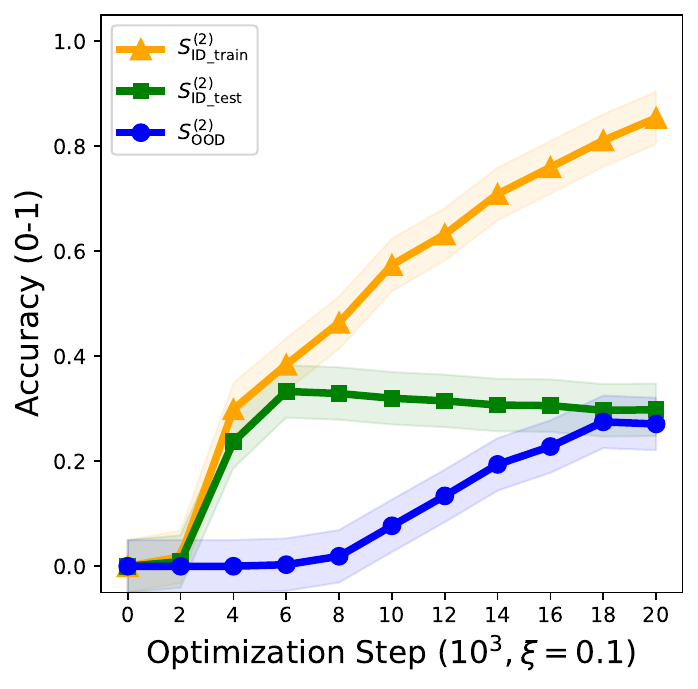}
    \end{subfigure} 
  \caption{\textbf{Left Part:} the impact of \textit{only} the second-hop noise on ID (Left) and
OOD (Center Left) generalization. Noise has a significant impact on the final performance on both ID and OOD test data. However, the generalization trends for ID and OOD differ under noisy conditions. \textbf{Right Part:} the model’s accuracy on training and testing two-hop
reasoning facts at different noise ratios (\textit{both} hops are noisy). It compares the results for $\xi$ values of 0.05 (Center Right), 0.1 (Right).}
  \label{figure:noise effect}
\end{figure}

$\diamond$ \textbf{Method.} We aim to analyze the robustness of systematic generalization achieved through CoT training when noise is present in the training data. We use the same datasets as in Section \ref{subsec:cot vs. nocot}, along with a two-layer model described in Appendix \ref{app:2_circuit}. We introduce noise to the ID test examples $S_{\text{ID}_{\text{train}}}^{(2)}$ by randomly selecting a valid entity (the gold training target is $e_1,r_1,r_2,e_2,e_3$): (1) Only the second hop is noisy, $e_1,r_1,r_2,e_2,e_3^{\text{noise}}$; (2) Both hops are noisy, $e_1,r_1,r_2,e_2^{\text{noise}},e_3^{\text{noise}}$. Note that the noise ratio is denoted as $\xi$, we explore the impact of different $\xi$. Next, we analyze different noise ratio ($\xi$) candidate sets for the two situations: ${0.05, 0.2, 0.4, 0.6, 0.8}$ for cases where only the second hop is noisy, and ${0.05, 0.1}$ for cases where both hops are noisy. More results are in Appendix \ref{app:noise result}.

$\diamond$ \textbf{CoT training still enables systematic generalization even with data containing noisy steps} (In comparison to Figure \ref{figure:cot vs noncot} (Left))\textbf{.} (1) Figure \ref{figure:noise effect} (Left Part) clearly demonstrates the impact of only the second-hop noise on ID and OOD generalization. Overall, under CoT training conditions, the model is still able to achieve systematic generalization from noisy training data, but its generalization ability decreases as the noise ratio increases. More specifically, as training progresses, OOD generalization initially remains constant and then increases, while ID generalization first increases and then decreases. The decrease in ID generalization corresponds to the increase in OOD generalization. However, the final performance of both ID and OOD generalization decreases as the ratio increases. In particular, when the noise ratio ($\xi<0.2$) is small, the model remains largely unaffected, highlighting the robustness of CoT training. Additionally, using the method from Section \ref{sec:circuit}, we examine the circuits. Since noise is only introduced in the second hop, the first-hop circuit is well-learned, while the second-hop circuit is more significantly impacted by noise. (2) Figure \ref{figure:noise effect} (Right Part) shows a comparison of the results for both
hop noise $\xi$ values of $0.05, 0.1$. Adding noise to both hops has a much stronger suppressive effect on the model's generalization compared to adding noise only to the second hop. This indicates that the harm caused by CoT training data with completely incorrect reasoning steps is enormous. (3) To sum up, CoT training enables systematic generalization even with noisy training data, as long as the noise remains within a certain range\footnote{\citet{deepseekai2025deepseekr1incentivizingreasoningcapability} collect about 600k long CoT training samples. Errors during the process are acceptable, as it is impossible for humans to manually write 600k high-quality solutions. Moreover, through reinforcement learning, OpenAI O1 \citep{openai2024o1} learns to hone its chain of thought and refine the strategies it uses. It learns to recognize and correct its mistakes.}. Specifically, when the noise ratio is low, noisy data can still facilitate the model's learning of generalization circuits. The training bottleneck
lies in collecting or synthesizing complex long CoT solutions, with some errors being acceptable. Additionally, we analyze the relationship between noisy samples and those with prediction errors, with further details on the effect of first-hop noise provided in Appendix \ref{app:noise discuss}.  Validations on real-world datasets (e.g., math word problems \& compositional reasoning) are shown in Appendix \ref{sec:real}.

\section{More Discussion}\label{more dis}

\textbf{Theoretical Contribution}. According to the traditional statistical learning viewpoint, performance can be defined by the sum of optimization error and generalization error. For generalization (Section \ref{sec:theory}), the expected generalization error = population risk - empirical risk. More specifically, on the one hand, Theorems \ref {main:theorem1} and \ref{main:theorem2} show that CoT training helps reduce the expected generalization error. On the other hand, we can evaluate the empirical risk by assessing the model's performance on the validation set (Section \ref{subsec:cot vs. nocot}). If the generalization error is known, it is possible to estimate the population risk. Therefore, the most challenging aspect is addressing the generalization error. For optimization, from an intuitive perspective, CoT-trained models learn to systematically compose simpler skills ($P(C|X) , P(Y |X, C)$), whereas those trained without CoT merely match patterns ($P(Y|X)$) seen in the training data. That is, the optimization object for CoT training is $||P(C_{target}|X)-P(C_{predict}|X)||^2+||P(Y_{target}|X,C_{target})-P(Y_{predict}|X,C_{predict})||^2$, which may be finer-grained (if the snowballing errors \citep{bachmann2024the} are within an acceptable range) than optimization object for non-CoT training $||P(Y_{target}|X)-P(Y_{predict}|X)||^2$.

\textbf{Structural Discussion}. Our structural analysis (in Section \ref{sec:circuit}) leads a discussion on the \textbf{key limitation of transformer} models in OOD generalization. For the circuit with two-hop facts $(e_1,r_1,e_2)\oplus(e_2,r_2,e_3)\rightarrow (e_1,r_1,r_2,e_3)$. (i) Training with CoT: the bridge entity ($e_2$) can be extracted from the middle layer hidden states $E(\text{layer index},r_2)$, for ID $\text{layer index}= 3$, and for OOD, $\text{layer index}= 5$ (OOD is more challenging than ID). (ii) Training without CoT: The generalization circuit fails to emerge in OOD settings, aligning with the performance drop in Figure \ref{figure:cot vs noncot} (left). In ID settings, $\text{layer index} \geq 5$, higher than that in Training with CoT. \textbf{Intuitively, smaller $\text{layer index}$ imply that more layers remain available for processing the second hop, potentially leading to better performance}. Only when the intermediate result $e_2$ is resolved can the final result $e_3$ be correctly derived, traditional transformer cannot process subtasks in parallel, which is the imitations of traditional transformer architectures.

The generalization circuit we discovered essentially corresponds to the reuse of the model's weights (effective depth increases \citep{feng2023towards}). Regarding the OOD generalization capability for compositional tasks, what we would truly like to see is whether the model can accomplish multi-step reasoning directly within a single computational process. To investigate this, we evaluate models trained with CoT—using twice the number of layers with weight reuse—under a modified input format: “$e_1, r_1, r_2, \text{[mask]}$”. This setup is designed to emulate multiple forward passes typically required in next-token prediction. By examining the intermediate hidden states at the [mask] position (from the last layer), we observe that the model remains capable of correctly predicting $e_3$.

\section{More Related Work}
\textbf{Chain-of-Thought Reasoning and Training.} Research has demonstrated that incorporating an intermediate reasoning process in language before producing the final output significantly enhances performance, especially for transformers \citep{vaswani2017attention} with advanced and robust generation capabilities. This includes prompting LLMs \citep{wei2022chain,madaan2022textpatternseffectivechain,zhou2023leasttomost,khot2023decomposedpromptingmodularapproach,saparov2023testing,zhang2025generativeverifiersrewardmodeling,teng2025atomthoughtsmarkovllm}, or training LLMs to generate reasoning chains, either through supervised fine-tuning \citep{yue2023mammothbuildingmathgeneralist,yu2024metamath} or reinforcement learning \citep{wang-etal-2024-math,havrilla2024teaching,shao2024deepseekmathpushinglimitsmathematical,yu2024flowreasoningtrainingllmsdivergent,yeo2025demystifyinglongchainofthoughtreasoning}. \textit{However, the specific advantages of CoT training remain an open question, which is the focus of our study}. There are also theoretical analyses that have shown the usefulness of CoT from the perspective of model capabilities (e.g., expressivity) \citep{feng2023towards,merrill2024expressivepowertransformerschain,li2024chain,prabhakar2024decipheringfactorsinfluencingefficacy,yin2025enhancinggeneralizationchainthought}. For instance, by employing CoT, the effective depth of the transformer increases as the generated outputs are fed back into the input \citep{feng2023towards}. \textit{These analyses, along with the practice effectiveness of CoT, motivate our exploration of the core mechanisms underlying explicit CoT training. Our findings suggest that a two-layer transformer may be sufficient for learning compositional circuits through CoT training, which may explain the origin of CoT's expressivity during the training phase.}

\textbf{Latent Reasoning in Language Models.} The investigation of latent multi-hop abilities of LLMs could also have significant implications for areas such as generalization \citep{pmlr-v80-lake18a,onoe-etal-2023-lms} and model editing \citep{zhong-etal-2023-mquake,10.1162/tacl_a_00644}. To enhance the latent reasoning capabilities of LLMs, it is crucial to first understand the internal mechanisms by which LLMs effectively handle two-hop facts using latent multi-hop reasoning, which is also our focus. While a precise latent reasoning pathway has been found in controlled experiments \citep{stolfo-etal-2023-mechanistic,nanda2023progress,conmy2023towards,brinkmann-etal-2024-mechanistic,li-etal-2024-understanding,rai-yao-2024-investigation,yao2024knowledge,wang2024grokking}, it has not been thoroughly investigated in large pre-trained models. To fill this gap, \citet{yang2024largelanguagemodelslatently} construct a dataset of two-hop reasoning problems and discovered that it is possible to recover the intermediate variable from the hidden representations. \citet{biran2024hoppinglateexploringlimitations} identify a sequential latent reasoning pathway in LLMs, where the first-hop query is initially resolved into the bridge entity, which is then used to answer the second hop, they further propose to intervene the latent reasoning by “back-patching” the hidden representation. \textit{Nevertheless, these studies focus only on CoT prompting and do not consider CoT training.} Recently, it has also been found that one can “internalize” the CoT reasoning into latent reasoning in transformers with knowledge distillation \citep{deng2023implicitchainthoughtreasoning} or a special training curriculum that gradually shortens
CoT \citep{deng2024explicitcotimplicitcot}. Loop transformers \citep{giannou2023looped,cabannes2024iteration,fan2024loopedtransformerslengthgeneralization} have been proposed to solve algorithmic tasks. \textit{However, these works focus more on innovations in training methods and algorithms, without fully analyzing the underlying mechanisms of CoT training. This is precisely what we aim to uncover}.

\section{Conclusion and Limitation}\label{sec:con lim}
\textbf{Conclusion.} In this work, we present a theoretical and structural framework to understand the generalization of CoT training. Our analysis reveals that the key mechanism behind CoT training lies in its ability to foster compositional generalization: models learn to systematically combine simpler, previously acquired skills to tackle novel and more complex problems. Theoretically, we demonstrate that the generalization error bound naturally decomposes into ID and OOD components. By breaking down complex tasks into simpler subtasks, CoT training narrows the gap between the training distribution and both ID and OOD test distributions, thereby enabling robust generalization performance. Structurally, we identify that CoT-trained models internalize reasoning into a staged compositional circuit—an architectural reflection of how CoT encourages models to learn “how to think”, rather than merely “what to think”. The quality of CoT data should be measured not only by the correctness of answers, but more critically, by the clarity and decomposition of key reasoning steps. Our findings have important implications for the design of CoT training strategies. 

\textbf{Limitation.} However, further studies are required to (i) explore the potential of LLM reasoning in an unrestricted latent space, specifically through approaches such as training large language models to reason in a continuous latent space \citep{hao2024traininglargelanguagemodels}, (ii) extend our analysis to reinforcement fine-tuning settings \citep{swamy2025roadsleadlikelihoodvalue,setlur2025scalingtesttimecomputeverification,yue2025doesreinforcementlearningreally,gandhi2025cognitivebehaviorsenableselfimproving,ai2025rethinkingreflectionpretraining} and (iii) while the notion of ``simpler learned skills" is straightforward in synthetic tasks (e.g., atomic facts), it becomes substantially more implicit and difficult to delineate in real-world scenarios. \citet{yuan2025llms} propose that one practical approach is to first use a small seed set of SFT data to instill fundamental concepts within a target domain. Intuitively, the basic capabilities acquired by the model before CoT fine-tuning can be viewed as such ``simpler learned skills". Those will deepen our understanding of the CoT training capabilities.


\subsubsection*{Acknowledgments}
This research was supported by National Key Research and Development Program of China (NO. 2024YFE0203200), Beijing Natural Science Foundation (Z250001), National Natural Science Foundation of China (No.62476277), CCF-ALIMAMA TECH Kangaroo Fund (No.CCF-ALIMAMA OF 2024008), and Huawei-Renmin University joint program on Information Retrieval. We also acknowledge the support provided by the fund for building worldclass universities (disciplines) of Renmin University of China and by the funds from Beijing Key Laboratory of Big Data Management and Analysis Methods, Gaoling School of Artificial Intelligence, Renmin University of China, from Engineering Research Center of Next-Generation Intelligent Search and Recommendation, Ministry of Education, from Intelligent Social Governance Interdisciplinary Platform, Major Innovation \& Planning Interdisciplinary Platform for the “DoubleFirst Class” Initiative, Renmin University of China, from Public Policy and Decision-making Research Lab of Renmin University of China, and from Public Computing Cloud, Renmin University of China.

\bibliography{iclr2026_conference}
\bibliographystyle{iclr2026_conference}
\newpage 
\appendix
\section*{LLM Usage} Regarding the use of LLMs, they were employed solely for language polishing purposes and played no role in research ideation, literature retrieval, or any other academically substantive activities.
\section{More Related Work}
\textbf{Chain-of-Thought Reasoning.} Research has demonstrated that incorporating an intermediate reasoning process in language before producing the final output significantly enhances performance, especially for transformers \citep{vaswani2017attention} with advanced and robust generation capabilities. This includes prompting LLMs \citep{wei2022chain,madaan2022textpatternseffectivechain,zhou2023leasttomost,khot2023decomposedpromptingmodularapproach,saparov2023testing,zhang2025generativeverifiersrewardmodeling,teng2025atomthoughtsmarkovllm}, or training LLMs to generate reasoning chains, either with supervised fine-tuning \citep{yue2023mammothbuildingmathgeneralist,yu2024metamath,yaoijcai} or reinforcement learning \citep{wang-etal-2024-math,havrilla2024teaching,shao2024deepseekmathpushinglimitsmathematical,yu2024flowreasoningtrainingllmsdivergent,yeo2025demystifyinglongchainofthoughtreasoning,yao2025debate}. There are also theoretical analyses have shown the usefulness of CoT from the perspective of model expressivity and generalization \citep{feng2023towards,merrill2024expressivepowertransformerschain,li2024chain,prabhakar2024decipheringfactorsinfluencingefficacy,yao2024enhancing,yao2025an,yao2025probabilitysignaturebridgingdata,yao2025solvingmultiscaledynamicalsystems,yin2025enhancinggeneralizationchainthought,huang2025transformers,huang2026tuning}. By employing CoT, the effective depth of the transformer increases as the generated outputs are fed back into the input \citep{feng2023towards}. This aligns with our findings, suggesting that a two-layer transformer is sufficient for learning compositional circuits through CoT training. While CoT has proven effective for certain tasks, its autoregressive generation nature makes it difficult to replicate human reasoning on more complex problems \citep{lecun2022path,hao-etal-2023-reasoning}, which often require planning and search \citep{xie2023selfevaluation,yao2023tree,hao2024llm,lehnert2024beyond,gandhi2024stream,su2024dualformercontrollablefastslow,li2024happenedllmslayerstrained}. These analyses, along with the practice effectiveness, motivate our exploration of the core mechanisms underlying CoT training.

\textbf{Latent Reasoning in Language Models.} The investigation of latent multi-hop abilities of LLMs could also have significant implications for areas such as generalization \citep{pmlr-v80-lake18a,onoe-etal-2023-lms} and model editing \citep{zhong-etal-2023-mquake,10.1162/tacl_a_00644}. To enhance the latent reasoning capabilities of LLMs, it is crucial to first understand the internal mechanisms by which LLMs effectively handle two-hop facts using latent multi-hop reasoning, which is also our focus. While a precise latent reasoning pathway has been found in controlled experiments \citep{stolfo-etal-2023-mechanistic,nanda2023progress,conmy2023towards,brinkmann-etal-2024-mechanistic,li-etal-2024-understanding,rai-yao-2024-investigation,yao2024knowledge,wang2024grokking}, it has not been thoroughly investigated in large pre-trained models. To fill this gap, \citet{yang2024largelanguagemodelslatently} construct a dataset of two-hop reasoning problems and discovered that it is possible to recover the intermediate variable from the hidden representations. \citet{biran2024hoppinglateexploringlimitations} identify a sequential latent reasoning pathway in LLMs, where the first hop query is initially resolved into the bridge entity which is then used to answer the second hop, they further propose to intervene the latent reasoning by “back-patching” the
hidden representation. \citet{shalev2024distributionalreasoningllmsparallel}  discover parallel latent reasoning paths in LLMs. \citet{ghandehariounpatchscopes} introduce a framework called “Patchscopes”  to explain LLMs internal representations in natural language. Recently, it has also been found that one can “internalize” the CoT reasoning into latent reasoning in the transformer with knowledge distillation \citep{deng2023implicitchainthoughtreasoning} or a special training curriculum which gradually shortens
CoT \citep{deng2024explicitcotimplicitcot}. Loop transformers \citep{giannou2023looped,cabannes2024iteration,fan2024loopedtransformerslengthgeneralization} have been proposed to solve algorithmic tasks. These have some similarities to the two-stage compositional circuit we present.
\newpage
\section{Omitted Proofs and Definitions}\label{app:proofs}
\subsection{Lemma 1 and Proof}
\begin{lemma}[KL Divergence Decomposition]\label{main:lemma1} Under the conditions of Assumption \ref{assumption1}, the KL divergence between test and training conditional distributions satisfies:
\begin{align*}
  D_{\text{KL}}(P_{\text{test}}(Y|X)||P_{\text{train}}(Y|X))=&-H(\alpha)+(1-\alpha)D_{\text{KL}}(P^{\text{ID}}_{\text{test}}(Y|X)||P_{\text{train}}(Y|X))\\&+\alpha D_{\text{KL}}(P^{\text{OOD}}_{\text{test}}(Y|X)||P_{\text{train}}(Y|X)),  
\end{align*}
where $D_{\text{KL}}(P_{\text{test}}(Y|X)||P_{\text{train}}(Y|X))=\mathbb{E}_{y|x\sim P_{\text{test}}}\left [\log \frac{P_{\text{test}}(Y|X)}{P_{\text{train}}(Y|X)} \right]$. $H(\alpha)$ is the entropy of $\alpha$ quantifying ID/OOD uncertainty, the other terms capture ID/OOD deviation from the training distribution.
\end{lemma}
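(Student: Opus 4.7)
The plan is to prove Lemma \ref{main:lemma1} by a direct computation that exploits the disjoint-support structure guaranteed by Assumption \ref{assumption1}. First I would write the KL divergence in its integral (or summation) form,
\[
D_{\text{KL}}(P_{\text{test}}(Y|X)\|P_{\text{train}}(Y|X)) = \int P_{\text{test}}(y|x)\,\log\frac{P_{\text{test}}(y|x)}{P_{\text{train}}(y|x)}\,dy,
\]
and substitute the mixture $P_{\text{test}}(Y|X) = (1-\alpha)P_{\text{test}}^{\text{ID}}(Y|X)+\alpha P_{\text{test}}^{\text{OOD}}(Y|X)$ only inside the numerator of the log.

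Next, I would use the disjointness of the supports of $P_{\text{test}}^{\text{ID}}$ and $P_{\text{test}}^{\text{OOD}}$ to split the integration domain into two pieces. On the ID support, the mixture collapses to $(1-\alpha)P_{\text{test}}^{\text{ID}}(y|x)$, and on the OOD support it collapses to $\alpha P_{\text{test}}^{\text{OOD}}(y|x)$; this lets me pull the log apart cleanly without any leftover cross terms, because the OOD component vanishes on the ID support and vice versa.

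I would then split each $\log$ as $\log(1-\alpha)+\log P_{\text{test}}^{\text{ID}}/P_{\text{train}}$ (respectively $\log\alpha+\log P_{\text{test}}^{\text{OOD}}/P_{\text{train}}$), and use that each of $P_{\text{test}}^{\text{ID}}$ and $P_{\text{test}}^{\text{OOD}}$ integrates to one on its own support. The $\log(1-\alpha)$ and $\log\alpha$ pieces then aggregate, weighted by $(1-\alpha)$ and $\alpha$ respectively, into exactly $(1-\alpha)\log(1-\alpha)+\alpha\log\alpha = -H(\alpha)$, while the remaining pieces become $(1-\alpha)D_{\text{KL}}(P_{\text{test}}^{\text{ID}}\|P_{\text{train}})$ and $\alpha D_{\text{KL}}(P_{\text{test}}^{\text{OOD}}\|P_{\text{train}})$, yielding the claim.

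This argument is essentially bookkeeping once the disjoint-support observation is in place, so there is no substantive obstacle; the only point to be careful about is making the disjoint-support reduction rigorous (so that the logarithm of the mixture really does collapse to a single branch on each support, avoiding any indeterminate $\log 0$ contributions) and verifying that $P_{\text{train}}(Y|X)$ is absolutely continuous with respect to both component distributions so that all the KL divergences on the right-hand side are well-defined. If one wishes to be fully explicit, this last point can be recorded as an implicit regularity assumption accompanying Assumption \ref{assumption1}.
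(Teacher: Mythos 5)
Your proposal is correct and follows essentially the same route as the paper's proof: write out the KL divergence, use the mixture form together with the disjoint-support assumption to collapse $P_{\text{test}}$ to a single scaled component on each branch, pull out the $\log(1-\alpha)$ and $\log\alpha$ constants to form $-H(\alpha)$, and recognize the remaining pieces as the weighted component KL divergences. The only (minor) addition beyond the paper's argument is your remark that the component KL divergences should be assumed well-defined (absolute continuity of each test component w.r.t.\ $P_{\text{train}}$), which the paper leaves implicit.
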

\begin{proof}
    Original definition of KL divergence:
    $$D_{\text{KL}}(P_{\text{test}}(Y|X)||P_{\text{train}}(Y|X))=\mathbb{E}_{y|x\sim P_{\text{test}}}\left [\log \frac{P_{\text{test}}(Y|X)}{P_{\text{train}}(Y|X)} \right]$$
    Since $P_{\text{test}}$ is a mixture distribution, the expectation can be decomposed into two parts:
    $$
    = (1 - \alpha) \, \mathbb{E}_{y|x \sim P_{\text{test}}^{\text{ID}}} \left[ \log \frac{P_{\text{test}}(Y|X)}{P_{\text{train}}(Y|X)} \right] 
+ \alpha \, \mathbb{E}_{y|x \sim P_{\text{test}}^{\text{OOD}}} \left[ \log \frac{P_{\text{test}}(Y|X)}{P_{\text{train}}(Y|X)} \right]
    $$
    Due to the disjoint support assumption:
    $$
    = (1 - \alpha) \, \mathbb{E}_{y|x \sim P_{\text{test}}^{\text{ID}}} \left[ \log \frac{(1 - \alpha)P_{\text{test}}^{\text{ID}}(Y|X)}{P_{\text{train}}(Y|X)} \right] 
+ \alpha \, \mathbb{E}_{y|x \sim P_{\text{test}}^{\text{OOD}}} \left[ \log \frac{\alpha P_{\text{test}}^{\text{OOD}}(Y|X)}{P_{\text{train}}(Y|X)} \right]
    $$
    By isolating the constant term and the KL divergence:
    \begin{align*}
    =&\alpha\log \alpha+(1-\alpha)\log (1-\alpha)+(1-\alpha)D_{\text{KL}}(P^{\text{ID}}_{\text{test}}(Y|X)||P_{\text{train}}(Y|X))\\&+\alpha D_{\text{KL}}(P^{\text{OOD}}_{\text{test}}(Y|X)||P_{\text{train}}(Y|X))\\
    =&-H(\alpha)+(1-\alpha)D_{\text{KL}}(P^{\text{ID}}_{\text{test}}(Y|X)||P_{\text{train}}(Y|X))\\&+\alpha D_{\text{KL}}(P^{\text{OOD}}_{\text{test}}(Y|X)||P_{\text{train}}(Y|X)),
    \end{align*}
    where $H(\alpha)=-\sum_{i=1}^K\alpha_i\log \alpha_i,\sum_{i=1}^K\alpha_i=1,\forall \alpha_i\geq0$. This completes the proof.
\end{proof}
\subsection{Expected Generalization Error}\label{app:def_error}
Recently, information-theoretic generalization bounds \citep{xu2017information,russo2019dataexploration,steinke2020reasoning,wang2022facets,wang2024generalization} have been introduced to analyze the expected generalization error of learning algorithms. A key benefit of these bounds is that they depend not only on the data distribution but also on the specific algorithm, making them an ideal tool for studying the generalization behavior of models trained using particular algorithms.
\begin{definition}[Expected Generalization Error]\label{definition:error}
    We let $\mathcal{Z}=\mathcal{X}\times \mathcal{Y}$ be the instance space and $\mu$ be an unknown distribution on $\mathcal{Z}$, specifying random variable $Z$. Here, $\mathcal{X}$ denotes the feature space and $\mathcal{Y}$ is the label space.  Suppose one observes a training set $S_N \triangleq (Z_1,...,Z_N)\in \mathcal{Z}^N$, with $N$ i.i.d. training examples drawn from $\mu$. In the information-theoretic analysis framework, we let $\mathcal{W}$ be the space of hypotheses related to the model, and a stochastic learning algorithm $\mathcal{A}$ which takes the training examples $S_N$  as its input and outputs a hypothesis $W\in \mathcal{W}$ according to some conditional distribution $Q_{W|S_N}$. Given a loss function $\ell:\mathcal{W} \times \mathcal{Z} \rightarrow \mathbb{R}^+$, where $\ell(w, Z)$ measures the “unfitness” or “error” of any $Z\in \mathcal{Z}$ with respect to a hypothesis $w\in \mathcal{W}$. We take $\ell$ as a continuous function and assume that $\ell$ is differentiable almost everywhere with respect to $w$.
The goal of learning is to find a hypothesis $w$ that minimizes the population risk, and for any $w\in \mathcal{W}$,
the population risk is defined as $L_{\mu}(w) \triangleq \mathbb{E}_{Z\sim\mu}[\ell(w, Z)].$  However, since only can partially observe $\mu$ via the sample $S_N$, we instead turn to use the empirical risk, defined as $L_{S_N}(w) \triangleq \frac{1}{N} \sum_{i=1}^{N} \ell(w, Z_i)$. Then the expected generalization error of  $\mathcal{A}$ is  defined as 
\begin{equation*}
    \widetilde{\text{error}} \triangleq \mathbb{E}_{W,S_N}[L_{\mu}(W) - L_{S_N}(W)],
\end{equation*}
where the expectation is taken over $(S_N,W)\sim\mu^N\otimes Q_{W|S_N}$. 
\end{definition}
\begin{remark}
At this point, revisiting Definition \ref{definition1} and Assumption \ref{assumption1}, $S_N$ corresponds to the training data distribution $P_{\text{train}}(Y|X)$, while $\mu$ corresponds to the test data distribution $P_{\text{test}}(Y|X)$. Accordingly, the equation can be reformulated as:
$$\widetilde{\text{error}} \triangleq \mathbb{E}[h(S^{\text{test}},W^{'})]-\mathbb{E}[h(S_N^{\text{train}},W)],$$
where $h(s,w)=\frac{1}{N}\sum_{i=1}^Nl(w,Z_i)$. 

That is, generalization error=population risk-empirical risk.
\end{remark}
\subsection{Proof of Theorem 1}
Information-theoretic generalization bounds commonly center on input-output mutual information, with the Donsker-Varadhan representation of KL divergence (\citet*[Theorem 3.5]{polyanskiy2019lecture}) serving as a key analytical foundation.
\begin{lemma}[Donsker and Varadhan’s variational formula]\label{app:lemma1} Let $P_1,P_2$ be probability measures on $\mathcal{Z}$, for any bounded measurable function $f:\mathcal{Z}\rightarrow\mathbb{R}$, we have $D_{\text{KL}}(P_1||P_2)=\sup_f\mathbb{E}_{Z\sim P_1}[f(Z)]-\log\mathbb{E}_{Z\sim P_2}[\exp f(Z)]$.
\end{lemma}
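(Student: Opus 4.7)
The plan is to establish the two inequalities separately via the standard tilted-measure (Gibbs) argument. First I would reduce to the nontrivial case $P_1\ll P_2$: if $P_1\not\ll P_2$ then $D_{\text{KL}}(P_1||P_2)=+\infty$, in which case the bound $\sup_f(\cdots)\le D_{\text{KL}}(P_1||P_2)$ is vacuous, while the matching lower bound can be recovered by letting bounded $f$ grow large on the set where $dP_2=0$ but $dP_1>0$. Write $g=dP_1/dP_2$ throughout.

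For the upper bound, for each bounded measurable $f$ I would introduce the tilted probability measure $P_f$ defined by $dP_f/dP_2 = e^{f}/Z_f$, where $Z_f := \mathbb{E}_{Z\sim P_2}[e^{f(Z)}]$; boundedness of $f$ guarantees $Z_f\in(0,\infty)$, so $P_f$ is a valid probability measure with $P_f\ll P_2$. A direct chain-rule computation yields
\begin{align*}
D_{\text{KL}}(P_1||P_f) &= \mathbb{E}_{P_1}\!\left[\log g - f + \log Z_f\right] \\
&= D_{\text{KL}}(P_1||P_2) - \mathbb{E}_{P_1}[f] + \log\mathbb{E}_{P_2}[e^f].
\end{align*}
Applying Gibbs' inequality $D_{\text{KL}}(P_1||P_f)\ge 0$ (itself a one-line Jensen application to the convex function $-\log$) and rearranging gives $\mathbb{E}_{P_1}[f]-\log\mathbb{E}_{P_2}[e^f]\le D_{\text{KL}}(P_1||P_2)$ for every bounded $f$, hence $\sup_f(\cdots)\le D_{\text{KL}}(P_1||P_2)$.

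For the matching lower bound the natural candidate is $f^\star=\log g$, which formally saturates the inequality because $\mathbb{E}_{P_2}[e^{f^\star}]=\mathbb{E}_{P_2}[g]=1$ forces $\log Z_{f^\star}=0$ while $\mathbb{E}_{P_1}[f^\star]=D_{\text{KL}}(P_1||P_2)$ by definition. The main technical obstacle is that $f^\star$ need not be bounded, so it does not directly qualify as a competitor in the supremum as stated. I would overcome this by truncation: set $f_n := \max(-n,\min(n,f^\star))$, which is bounded, and apply monotone/dominated convergence to show that $\mathbb{E}_{P_1}[f_n]\to \mathbb{E}_{P_1}[f^\star]$ (splitting into positive and negative parts of $f^\star$) while $\mathbb{E}_{P_2}[e^{f_n}]\to \mathbb{E}_{P_2}[e^{f^\star}]=1$ by dominated convergence with dominating function $1+g$. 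Passing to the limit yields $\sup_f(\cdots)\ge D_{\text{KL}}(P_1||P_2)$, closing the equality. The delicate step warranting most care is justifying these limits in the degenerate case $D_{\text{KL}}(P_1||P_2)=+\infty$, where one must verify that $\mathbb{E}_{P_1}[f_n^+]\uparrow+\infty$ even as $\mathbb{E}_{P_2}[e^{f_n}]$ remains finite — an easy consequence of the monotone convergence theorem applied to $(\log g)^+$.
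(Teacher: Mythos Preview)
Your proposal is a correct and fully standard proof of the Donsker--Varadhan variational formula: the tilted-measure identity
\[
D_{\text{KL}}(P_1\|P_f)=D_{\text{KL}}(P_1\|P_2)-\mathbb{E}_{P_1}[f]+\log\mathbb{E}_{P_2}[e^f]
\]
combined with nonnegativity of KL gives the upper bound, and the truncation $f_n$ of $\log(dP_1/dP_2)$ with monotone/dominated convergence yields the matching lower bound; your handling of the singular case and of the $D_{\text{KL}}=+\infty$ subcase is also correct (the key fact that $\mathbb{E}_{P_1}[(\log g)^-]\le 1/e<\infty$ makes the split into positive and negative parts legitimate).

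However, note that the paper itself does \emph{not} prove this lemma at all: it is quoted verbatim as a classical result with a citation to \citet*[Theorem~3.5]{polyanskiy2019lecture}, and is used only as an input to the proof of Lemma~\ref{app:lemma2}. So there is no ``paper's own proof'' to compare against here; your argument simply supplies the textbook derivation that the paper chose to outsource to the reference.
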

\begin{lemma}[{\citet*[Lemma 2.4.6]{wang2024generalization}}]\label{app:lemma2} Let $P_1$ and $P_2$ be probability measures on $\mathcal{Z}$. Let $Z^{'}\sim P_1$ and $Z\sim P_2$. If $g(Z)$ is $R$-subgaussian, then,
$$|\mathbb{E}_{Z^{'}\sim P_1}[g(Z^{'})]-\mathbb{E}_{Z\sim P_2}[g(Z)]|\leq \sqrt{2R^2D_{\text{KL}}(P_1||P_2)}.$$
\end{lemma}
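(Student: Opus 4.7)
The plan is to derive the inequality as a direct consequence of the Donsker--Varadhan variational formula (Lemma \ref{app:lemma1}) combined with the defining moment-generating-function bound of $R$-sub-Gaussianity, with the final step being a one-dimensional quadratic optimization over a scalar tilt parameter. Throughout, let $\Delta := \mathbb{E}_{Z'\sim P_1}[g(Z')]-\mathbb{E}_{Z\sim P_2}[g(Z)]$ denote the signed difference of expectations; the target inequality is $|\Delta|\leq\sqrt{2R^2 D_{\text{KL}}(P_1||P_2)}$.

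First, I would instantiate Lemma \ref{app:lemma1} with the test function $f(z)=\lambda g(z)$ for an arbitrary $\lambda\in\mathbb{R}$, which yields
$$D_{\text{KL}}(P_1||P_2)\;\geq\;\lambda\,\mathbb{E}_{Z'\sim P_1}[g(Z')]\;-\;\log\mathbb{E}_{Z\sim P_2}\bigl[\exp(\lambda g(Z))\bigr].$$
Next, I would invoke the $R$-sub-Gaussian hypothesis on $g(Z)$ under $P_2$, which by definition gives $\log\mathbb{E}_{Z\sim P_2}[\exp(\lambda(g(Z)-\mathbb{E}_{P_2}[g]))]\leq \lambda^2 R^2/2$, or equivalently $\log\mathbb{E}_{Z\sim P_2}[\exp(\lambda g(Z))]\leq \lambda\,\mathbb{E}_{Z\sim P_2}[g(Z)]+\lambda^2 R^2/2$. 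Substituting this upper bound on the log-MGF into the Donsker--Varadhan inequality and recognizing the resulting mean difference as $\Delta$ produces the quadratic lower bound
$$D_{\text{KL}}(P_1||P_2)\;\geq\;\lambda\Delta\;-\;\frac{\lambda^2 R^2}{2}\qquad\text{for every }\lambda\in\mathbb{R}.$$

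Finally, I would maximize the right-hand side over $\lambda$. The optimum $\lambda^\star=\Delta/R^2$ yields $D_{\text{KL}}(P_1||P_2)\geq \Delta^2/(2R^2)$, which rearranges to $|\Delta|\leq\sqrt{2R^2 D_{\text{KL}}(P_1||P_2)}$ once one observes that choosing $\lambda$ of the opposite sign (equivalently, replacing $g$ by $-g$, which remains $R$-sub-Gaussian with the same parameter) handles the case $\Delta<0$ and thereby supplies the absolute value. The only subtle point I foresee is justifying the use of Lemma \ref{app:lemma1} with the possibly unbounded test function $\lambda g$; sub-Gaussianity guarantees finiteness of the MGF on all of $\mathbb{R}$, so a standard truncation-and-monotone-limit argument (apply Lemma \ref{app:lemma1} to $\lambda g\wedge M$ and send $M\to\infty$) extends the variational formula to this setting. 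This is the only mildly nontrivial step; everything else is algebraic optimization.
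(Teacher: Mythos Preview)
Your proposal is correct and follows essentially the same route as the paper: apply Donsker--Varadhan with the test function $\lambda g$, bound the log-MGF via the $R$-sub-Gaussian definition to obtain the quadratic $D_{\text{KL}}(P_1\|P_2)\geq \lambda\Delta-\lambda^2 R^2/2$, and optimize over $\lambda$ (the paper phrases this last step as an AM--GM argument rather than taking $\lambda^\star=\Delta/R^2$, but these are equivalent). Your added remark about extending Lemma~\ref{app:lemma1} to the unbounded test function $\lambda g$ via truncation is a point the paper simply glosses over.
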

\begin{proof}
    Let $f=t\cdot g$ for any $t\in \mathbb{R}$, by Lemma \ref{app:lemma1}, we have
    \begin{align*}
        D_{\text{KL}}(P_1||P_2)&\geq \sup_t\mathbb{E}_{Z^{'}\sim P_1}[tg(Z^{'})]-\log\mathbb{E}_{Z\sim P_2}[\exp t\cdot g(Z)]
        \\&= \sup_t\mathbb{E}_{Z^{'}\sim P_1}[tg(Z^{'})]-\log\mathbb{E}_{Z\sim P_2}[\exp t( g(Z)-\mathbb{E}_{Z\sim P_2}[g(Z)]+\mathbb{E}_{Z\sim P_2}[g(Z)])]
        \\&= \sup_t \mathbb{E}_{Z^{'}\sim P_1}[tg(Z^{'})]-\mathbb{E}_{Z\sim P_2}[tg(Z)]-\log\mathbb{E}_{Z\sim P_2}[\exp t( g(Z)-\mathbb{E}_{Z\sim P_2}[g(Z)])]
        \\ &\geq \sup_t t(\mathbb{E}_{Z^{'}\sim P_1}[g(Z^{'})]-\mathbb{E}_{Z\sim P_2}[g(Z)])-t^2R^2/2,
    \end{align*}
    where the last inequality is by the subgaussianity of $g(Z)$.

    Then consider the case of $t>0$ and $t<0$ ($t=0$ is trivial), by AM–GM inequality
(i.e. the arithmetic mean is greater than or equal to the geometric mean), the following is
straightforward,
$$|\mathbb{E}_{Z^{'}\sim P_1}[g(Z^{'})]-\mathbb{E}_{Z\sim P_2}[g(Z)]|\leq \sqrt{2R^2D_{\text{KL}}(P_1||P_2)}.$$
This can also be proven by computing the extremum through differentiation with respect to $t$.
\end{proof}
Below, we provide the proof of Theorem \ref{main:theorem1}.
\begin{proof}
Apply Lemma \ref{app:lemma2} to the generalization error (Definition \ref{definition:error}), we let $g(s,w)=\frac{1}{N}\sum_{i=1}^Nl(w,Z_i)$. Thus, if $l(w,Z)$ is $R$-subgaussian for all $w\in \mathcal{W}$, then $g(S_N^{\text{train}},W)$ is $R/\sqrt{N}$-subgaussian due to the i.i.d. assumption on $Z_i$'s. This implies that $g(S^{\text{test}},W^{'})$ is also $R/\sqrt{N}$-subgaussian, where $S^{\text{test}},W^{'}$ are independent copies of $S_N^{\text{train}},W$, respectively. So we have:
\begin{equation}\label{eq:app1}
    \widetilde{\text{error}} \leq \sqrt{\frac{2R^2}{N}D_{\text{KL}}(P_{\text{test}}||P_\text{train})},
\end{equation}

According to Lemma \ref{main:lemma1}, for $-H(\alpha)\leq 0$:
\begin{align*}
  D_{\text{KL}}(P_{\text{test}}||P_{\text{train}})&=-H(\alpha)+(1-\alpha)D_{\text{KL}}(P^{\text{ID}}_{\text{test}}||P_{\text{train}})+\alpha D_{\text{KL}}(P^{\text{OOD}}_{\text{test}}||P_{\text{train}})
  \\&\leq (1-\alpha)D_{\text{KL}}(P^{\text{ID}}_{\text{test}}||P_{\text{train}})+\alpha D_{\text{KL}}(P^{\text{OOD}}_{\text{test}}||P_{\text{train}}).
\end{align*}
Putting everything together, we arrive at:
\begin{align*}
    \widetilde{\text{error}} \leq \sqrt{\frac{2R^2}{N}\left[ (1-\alpha)D_{\text{KL}}(P^{\text{ID}}_{\text{test}}||P_{\text{train}})+\alpha D_{\text{KL}}(P^{\text{OOD}}_{\text{test}}||P_{\text{train}})\right]}.
\end{align*}
Under the conditions of Lemma \ref{main:lemma1} and Lemma \ref{app:lemma2}, we complete the proof.
\end{proof}
\newpage
\subsection{Extensions of Theorem 1}\label{app:extend theorem1}
Recall Theorem \ref{main:theorem1}, the expected generalization error is bounded by:
\begin{align*}
    \widetilde{\text{error}} \leq \sqrt{\frac{2R^2}{N}\left[ (1-\alpha)D_{\text{KL}}(P^{\text{ID}}_{\text{test}}(Y|X)||P_{\text{train}}(Y|X))+\alpha D_{\text{KL}}(P^{\text{OOD}}_{\text{test}}(Y|X)||P_{\text{train}}(Y|X))\right]},
\end{align*}
where $N$ is the training data size, $Z=(X,Y)$ and $\mathcal{W}$ is the space of hypotheses related to the model. $\alpha$ denotes the mixing coefficient of the OOD data in test distribution and $D_{\text{KL}}(\cdot)$ represents the KL divergence between ID/OOD test and training distributions.

For ID, since $S_{\text{ID}_{\text{test}}}^{(2)}$ consists of unseen instances within known compositions $(r_i, r_j)$ from $S_{\text{ID}_{\text{train}}}^{(2)}$, the reasoning patterns in ID test examples exactly match those in the training set. Therefore, with sufficient training—whether with or without CoT—the ID generalization error approaches zero (i.e., $D_{\text{KL}}(P^{\text{ID}}_{\text{test}}||P_{\text{train}})=0$). Consequently, we obtain the following corollary:
\begin{corollary}\label{corollary1} Let the conditions specified in Theorem \ref{main:theorem1} hold, and assume sufficient training such that $D_{\text{KL}}(P^{\text{ID}}_{\text{test}}||P_{\text{train}})\rightarrow0$. Then the expected generalization error is dominated by the OOD component:
\begin{align*}
\widetilde{\text{error}} \leq \sqrt{\frac{2R^2\alpha}{N}D_{\text{KL}}(P^{\text{OOD}}_{\text{test}}(Y|X)||P_{\text{train}}(Y|X))}.
\end{align*}
This implies that when ID patterns are perfectly learned, the error scales with $\sqrt{\alpha D_{\text{KL}}^{\text{OOD}}}$, revealing that OOD generalization fundamentally limits performance even under optimal ID training.
\end{corollary}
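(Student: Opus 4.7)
The plan is to obtain Corollary \ref{corollary1} as an immediate specialization of Theorem \ref{main:theorem1}, since the hypotheses are identical and we only add the extra assumption $D_{\text{KL}}(P^{\text{ID}}_{\text{test}}(Y|X)\|P_{\text{train}}(Y|X))\to 0$. I would begin by restating the bound from Theorem \ref{main:theorem1},
\[
    \widetilde{\text{error}} \leq \sqrt{\tfrac{2R^2}{N}\bigl[(1-\alpha)D_{\text{KL}}(P^{\text{ID}}_{\text{test}}\|P_{\text{train}})+\alpha D_{\text{KL}}(P^{\text{OOD}}_{\text{test}}\|P_{\text{train}})\bigr]},
\]
noting that all assumptions (subGaussianity of the loss, i.i.d.\ training sample, Assumption \ref{assumption1}) are explicitly inherited.

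Next I would substitute the vanishing-ID-divergence condition. Because the square root is continuous and monotone in its argument, taking $D_{\text{KL}}(P^{\text{ID}}_{\text{test}}\|P_{\text{train}})\to 0$ removes the first summand inside the bracket and the bound collapses to
\[
    \widetilde{\text{error}} \leq \sqrt{\tfrac{2R^2\alpha}{N}\, D_{\text{KL}}(P^{\text{OOD}}_{\text{test}}(Y|X)\|P_{\text{train}}(Y|X))},
\]
which is exactly the claimed inequality. A brief justification is worth including: since both KL terms are nonnegative and the bound in Theorem \ref{main:theorem1} is already an upper bound (obtained after discarding the nonpositive $-H(\alpha)$ correction from Lemma \ref{main:lemma1}), dropping the ID term under the limit assumption preserves the inequality direction and does not require any further probabilistic argument.

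Because this is a one-line specialization, there is no genuine technical obstacle; the only delicate point is conceptual, namely that ``sufficient training such that $D_{\text{KL}}(P^{\text{ID}}_{\text{test}}\|P_{\text{train}})\to 0$'' should be read as an asymptotic statement about the trained model's conditional distribution, consistent with the experimental observation in Figure \ref{figure:cot vs noncot} (Left Part) that ID accuracy saturates at one. I would add a short closing remark emphasizing the interpretation highlighted in the corollary: once ID patterns are perfectly internalized, the residual generalization error scales like $\sqrt{\alpha\, D_{\text{KL}}^{\text{OOD}}/N}$, so the OOD divergence and the mixing weight $\alpha$ become the sole controlling quantities, which sets up the motivation for decomposing $D_{\text{KL}}^{\text{OOD}}$ further via the CoT latent variable in Theorem \ref{main:theorem2}.
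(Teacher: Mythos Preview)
Your proposal is correct and matches the paper's approach: the paper presents Corollary \ref{corollary1} as an immediate consequence of Theorem \ref{main:theorem1} by substituting the assumption $D_{\text{KL}}(P^{\text{ID}}_{\text{test}}\|P_{\text{train}})\to 0$ into the bound, with no additional argument beyond what you outline.
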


For OOD, considering the data generation process, we have: $P(Y|X)=\sum_{C}P(Y|X,C)P(C|X)$, where $C$ denotes the CoT reasoning steps. (a) In training without CoT, due to the model does not explicitly learn $P(C|X)$ (i.e., $C$ never appears during training), the $P(C|X)$ becomes a uniform prior, making $P(Y|X,C)$ reduce to $P(Y|X)$. Moreover, since OOD test examples $S_{\text{OOD}_{\text{test}}}^{(2)}$ involve unseen compositional patterns, training without CoT struggles to generalize to OOD settings (the reasoning circuit only emerges during ID generalization). \textbf{That is, under training without CoT, there has been no improvement in the upper bound of OOD generalization}.

(b) In contrast, under training with CoT, both $P(C|X)$ and $P(Y|X,C)$ are explicitly modeled, which correspond precisely to the two stages of the compositional circuit. Since the sub-tasks have been observed during training, when the step-wise decomposition in $S_{\text{ID}_{\text{train}}}^{(2)}$ and $S_{\text{OOD}}^{(2)}$ aligns, OOD generalization can also achieve near-perfect performance. The next result provides an explicit form.

\begin{corollary}[Thereom \ref{main:theorem2}, OOD Generalization Error for Training with CoT]\label{corollary2}  Let the conditions specified in Theorem \ref{main:theorem1} hold, and assume sufficient training such that $D_{\text{KL}}(P^{\text{ID}}_{\text{test}}||P_{\text{train}})\rightarrow0$. Define $P(Y|X)=\sum_{C}P(Y|X,C)P(C|X)$, then the OOD generalization error is bounded by:
\begin{align*}
\widetilde{\text{error}}^2 &\leq \frac{2R^2\alpha}{N} [D_{\text{KL}}(P^{\text{OOD}}_{\text{test}}(C|X)||P_{\text{train}}(C|X))\\ &+\mathbb{E}_{C\sim P^{\text{OOD}}_{\text{test}}(C|X)}[D_{\text{KL}}(P^{\text{OOD}}_{\text{test}}(Y|X,C)||P_{\text{train}}(Y|X,C))]],
\end{align*}
where $C_i$ denotes the CoT reasoning steps, see Appendix \ref{app:proof corollary2} for a proof.
\end{corollary}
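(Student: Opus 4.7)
The plan is to reduce the claim to Theorem~\ref{main:theorem1} by exploiting the latent factorization $P(Y|X) = \sum_{i} P(Y|X, C_i)\, P(C_i|X)$ and then applying the chain rule of KL divergence. As a first step I would invoke Theorem~\ref{main:theorem1} under the sufficient-training assumption $D_{\text{KL}}(P^{\text{ID}}_{\text{test}}||P_{\text{train}})\to 0$; the ID summand vanishes and only the OOD term survives, leaving
\begin{equation*}
\widetilde{\text{error}}^2 \leq \frac{2R^2\alpha}{N}\, D_{\text{KL}}(P^{\text{OOD}}_{\text{test}}(Y|X)||P_{\text{train}}(Y|X)).
\end{equation*}

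Next I would lift this marginal divergence to a joint one over $(Y, C_i)$. Because $Y$ is obtained from $(Y, C_i)$ by forgetting $C_i$, the data-processing inequality (monotonicity of KL divergence under marginalization) yields $D_{\text{KL}}(P^{\text{OOD}}_{\text{test}}(Y|X)||P_{\text{train}}(Y|X)) \leq D_{\text{KL}}(P^{\text{OOD}}_{\text{test}}(Y, C_i|X)||P_{\text{train}}(Y, C_i|X))$. Then I would apply the chain rule of KL divergence on the factorization $P(Y, C_i|X) = P(C_i|X)\, P(Y|X, C_i)$, with $P = P^{\text{OOD}}_{\text{test}}$ and $Q = P_{\text{train}}$, to rewrite this joint KL as
\begin{equation*}
D_{\text{KL}}(P(C_i|X)||Q(C_i|X)) + \mathbb{E}_{C_i \sim P(C_i|X)}\bigl[D_{\text{KL}}(P(Y|X, C_i)||Q(Y|X, C_i))\bigr].
\end{equation*}
Chaining these three inequalities produces the displayed bound.

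The main obstacle is conceptual rather than technical: the bound is only informative when both joint factors on the training side, $P_{\text{train}}(C_i|X)$ and $P_{\text{train}}(Y|X, C_i)$, are genuine learned objects rather than degenerate priors. Under CoT training the model is explicitly supervised on $(X, C_i, Y)$ triples, so both conditional factors are shaped by the training loss; because the atomic sub-tasks composing any OOD two-hop query are already exercised during training and the step-wise decomposition in $S^{(2)}_{\text{ID}_{\text{train}}}$ matches the one required for $S^{(2)}_{\text{OOD}}$, both KL terms on the right-hand side can be driven close to zero, matching the near-perfect OOD accuracy observed empirically. Under non-CoT training, by contrast, $C_i$ is never observed, so (as noted in Remark~\ref{main:remark1}) $P_{\text{train}}(C_i|X)$ collapses to a uniform prior and the first KL term cannot vanish, which recovers the qualitative OOD failure seen in Figure~\ref{figure:cot vs noncot} (Left). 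A secondary subtlety is to verify that the $R$-subGaussianity hypothesis on the loss is not disturbed by the decomposition, but since we merely re-express the right-hand side of Theorem~\ref{main:theorem1}, no new concentration assumption is needed.
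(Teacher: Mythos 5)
Your proposal is correct, and it reaches the same inequality by a slightly different route. The first step (applying Theorem~\ref{main:theorem1} under $D_{\text{KL}}(P^{\text{ID}}_{\text{test}}\|P_{\text{train}})\to 0$ to isolate the OOD term) coincides with Corollary~\ref{corollary1}. Where you diverge from the paper is in how you bound $D_{\text{KL}}(P^{\text{OOD}}_{\text{test}}(Y|X)\|P_{\text{train}}(Y|X))$ by the two-term decomposition: you first invoke the data-processing inequality to lift the marginal KL over $Y$ to the joint KL over $(Y,C_i)$, and then apply the KL chain rule, $D(P(Y,C|X)\|Q(Y,C|X))=D(P(C|X)\|Q(C|X))+\mathbb{E}_{C\sim P}[D(P(Y|C,X)\|Q(Y|C,X))]$, to split the joint KL into the prior term and the conditional term. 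The paper instead proves the combined bound in one step by applying the log-sum inequality term-by-term to the mixtures $P_1(y|x)=\sum_c P_{11}(y|c,x)P_{12}(c|x)$ and $P_2(y|x)=\sum_c P_{21}(y|c,x)P_{22}(c|x)$, and then expanding the logarithm. Since the log-sum inequality is exactly the elementary ingredient behind both DPI and the chain rule, the two proofs contain the same mathematics; your version is more modular and cites standard named lemmas, while the paper's version is self-contained. Both are valid, and you correctly noted that no new concentration hypothesis is needed since the sub-Gaussian step was already handled inside Theorem~\ref{main:theorem1}.
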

\begin{remark}
When the intermediate reasoning outcomes (e.g., data patterns and reasoning steps) from explicit CoT training
are highly aligned with or closely match the intermediate reasoning required for final inference during testing, the model’s generalization ability is significantly enhanced. Specifically, the model explicitly learns both $P(C|X)$ and $P(Y|X,C)$, such that $D_{\text{KL}}(P^{\text{OOD}}_{\text{test}}(C|X)||P_{\text{train}}(C|X))\rightarrow0$ and $D_{\text{KL}}(P^{\text{OOD}}_{\text{test}}(Y|X,C)||P_{\text{train}}(Y|X,C))\rightarrow0$. Therefore, OOD generalization can also approach optimal performance under CoT-based training.
\end{remark}
\begin{remark}[Length Generalization Discussion] Length generalization continues to be an indispensable research avenue in CoT investigations \citep{zhu2024towards,yeo2025demystifyinglongchainofthoughtreasoning,wu2025when,abedsoltan2025task,wang2025indistributionsuccessscalingcurves}, we also experimentally study in Appendix \ref{app:muti-hop}. From a theoretical perspective (Corollary \ref{corollary2}), models achieve optimal generalization capability when the number of reasoning steps during training ($m_1$) matches that during testing ($m_2$), as this allows the model to independently learn to optimize the $D_{\text{KL}}(\cdot)$ for each individual step. The model can still generalize well when $m_1>m_2$
  and $m_2$ is a prefix structure \citep{wang2025indistributionsuccessscalingcurves} of $m_1$ (i.e., the subtasks in the shorter chain have all been encountered in the longer chain). On the other hand, when $m_1<m_2$ and $m_2$ contains unseen subtasks, generalization becomes more challenging.
    
\end{remark}
\newpage
\subsection{Proof of Theorem 2}\label{app:proof corollary2}
We are given two conditional distributions defined as mixtures:
\[
P_1(y \mid x) = \sum_c P_{11}(y \mid c, x) P_{12}(c \mid x),
\quad
P_2(y \mid x) = \sum_c P_{21}(y \mid c, x) P_{22}(c \mid x).
\]
We aim to derive the KL divergence between \( P_1(y \mid x) \) and \( P_2(y \mid x) \):
\[
D_{\mathrm{KL}}(P_1 \| P_2)
= \sum_y P_1(y \mid x) \log \frac{P_1(y \mid x)}{P_2(y \mid x)}.
\]
Substituting the expressions for \( P_1 \) and \( P_2 \):
\[
\begin{aligned}
D_{\mathrm{KL}}(P_1 \| P_2)
&= \sum_y \left[ \sum_c P_{11}(y \mid c, x) P_{12}(c \mid x) \right]
   \log \frac{ \sum_{c'} P_{11}(y \mid c', x) P_{12}(c' \mid x) }
             { \sum_{c'} P_{21}(y \mid c', x) P_{22}(c' \mid x) } \\
&= \sum_y \sum_c P_{12}(c \mid x) P_{11}(y \mid c, x)
   \log \frac{ \sum_{c'} P_{11}(y \mid c', x) P_{12}(c' \mid x) }
             { \sum_{c'} P_{21}(y \mid c', x) P_{22}(c' \mid x) }.
\end{aligned}
\]
This is the exact form of the KL divergence but is difficult to simplify further due to the logarithm over sums. We now derive a useful upper bound using the log-sum inequality ($f(x)=x\log x$ is a convex function):

\[
\sum_i a_i \log \frac{a_i}{b_i}
\ge \left( \sum_i a_i \right)
\log \frac{ \sum_i a_i }{ \sum_i b_i }.
\]

Let
\[
a_i = P_{11}(y \mid i, x) P_{12}(i \mid x),
\quad
b_i = P_{21}(y \mid i, x) P_{22}(i \mid x).
\]

Then the inequality implies:

\[
P_1(y \mid x) \log \frac{P_1(y \mid x)}{P_2(y \mid x)}
\le \sum_c P_{12}(c \mid x) P_{11}(y \mid c, x)
   \log \frac{P_{11}(y \mid c, x) P_{12}(c \mid x)}
             {P_{21}(y \mid c, x) P_{22}(c \mid x)}.
\]

Summing both sides over \( y \):

\[
\begin{aligned}
D_{\mathrm{KL}}(P_1 \| P_2)
&\le \sum_{y,c} P_{12}(c \mid x) P_{11}(y \mid c, x)
     \left[ \log \frac{P_{11}(y \mid c, x)}{P_{21}(y \mid c, x)}
          + \log \frac{P_{12}(c \mid x)}{P_{22}(c \mid x)} \right] \\
&= \sum_c P_{12}(c \mid x)
    \sum_y P_{11}(y \mid c, x)
    \log \frac{P_{11}(y \mid c, x)}{P_{21}(y \mid c, x)} \\
&\quad + \sum_c P_{12}(c \mid x)
    \log \frac{P_{12}(c \mid x)}{P_{22}(c \mid x)} \\
&= \mathbb{E}_{c \sim P_{12}(\cdot \mid x)}
    \left[ D_{\mathrm{KL}}(P_{11}(\cdot \mid c, x) \| P_{21}(\cdot \mid c, x)) \right]
   + D_{\mathrm{KL}}(P_{12}(\cdot \mid x) \| P_{22}(\cdot \mid x)).
\end{aligned}
\]
Thus, we obtain the following upper bound:

\[
\boxed{
D_{\mathrm{KL}}(P_1 \| P_2)
\le
D_{\mathrm{KL}}(P_{12}(\cdot \mid x) \| P_{22}(\cdot \mid x))
+
\mathbb{E}_{c \sim P_{12}(\cdot \mid x)}
\left[ D_{\mathrm{KL}}(P_{11}(\cdot \mid c, x) \| P_{21}(\cdot \mid c, x)) \right]
}
\]
Apply it to Corollary \ref{corollary1}, we complete the proof of Theorem \ref{main:theorem2} (Corollary \ref{corollary2}).

\newpage
\subsection{Theorem 3 and Proof}\label{app:sec theorem2}
Given that the reasoning pattern remained unchanged during the noise injection process in Sections \ref{sec:noise} and \ref{sec:real}, we adopt the following assumption regarding the training data distribution:
\begin{assumption}[Noise Injection to Training Distribution]\label{app:assumption1}
    Assume that output noise randomly replaces the original outputs with probability $\xi$. Let the original output distribution be denoted as $P_{\text{train}}(Y|X)$, then, the output distribution after noise injection is given by:
    $$
    P_{\text{new}}(Y|X)=(1-\xi)P_{\text{train}}(Y|X)+\xi P_{\text{noise}}(Y|X),
    $$
    where $P_{\text{noise}}(Y|X)$ represents the distribution over outputs after random replacement.
\end{assumption}
\begin{lemma}[Convexity of the KL divergence]\label{app:lemma3} For a fixed probability distribution $P$, the KL divergence $D_{\text{KL}}(P||Q)$ is a convex function with respect to $Q$. That is, for any two distributions $Q1,Q2$, and any $\beta \in [0,1]$, the following inequality holds:
    \[
    D_{\text{KL}}(P||(1-\beta)Q_1+\beta Q_2) \leq (1-\beta)D_{\text{KL}}(P||Q_1)+\beta D_{\text{KL}}(P||Q_2).
    \]
\end{lemma}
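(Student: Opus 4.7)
The plan is to reduce the claim to the pointwise convexity of the scalar function $f(x) = -\log x$ on $(0, \infty)$, which is a standard fact from calculus (its second derivative $1/x^2$ is positive). Since $P$ is fixed, the term $\sum_x P(x)\log P(x)$ appearing in each KL divergence cancels on both sides, so the inequality reduces to showing that
\begin{equation*}
-\sum_x P(x)\log\bigl[(1-\beta)Q_1(x) + \beta Q_2(x)\bigr] \;\le\; -(1-\beta)\sum_x P(x)\log Q_1(x) - \beta\sum_x P(x)\log Q_2(x).
\end{equation*}

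First I would rewrite both sides of the target inequality by expanding $D_{\text{KL}}(P\|Q) = \sum_x P(x)\log P(x) - \sum_x P(x)\log Q(x)$ and cancel the common entropy-type term. Next, I would apply the convexity inequality $-\log\bigl((1-\beta)a + \beta b\bigr) \le -(1-\beta)\log a - \beta \log b$ pointwise at $a = Q_1(x)$ and $b = Q_2(x)$ for every $x$ in the support of $P$. Finally, since $P(x) \ge 0$, multiplying through by $P(x)$ preserves the inequality, and summing (or integrating in the continuous case) over $x$ yields exactly the required bound. The boundary cases where $Q_1(x) = 0$ or $Q_2(x) = 0$ are handled by the usual convention $0 \log 0 = 0$ and by noting that if $P(x) > 0$ while $(1-\beta)Q_1(x) + \beta Q_2(x) = 0$, both sides are $+\infty$ so the inequality holds trivially.

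An equivalent route would be to apply the log-sum inequality (already used in the proof of Theorem \ref{main:theorem2} in this appendix) to the pair of sequences $\{P(x), P(x)\}$ and $\{(1-\beta)Q_1(x), \beta Q_2(x)\}$ for each $x$; summing the resulting pointwise bound over $x$ gives the same conclusion. There is essentially no real obstacle here, as the statement is a classical textbook fact; the only subtlety worth mentioning explicitly is the measure-theoretic care required to ensure the cancellation of the $\sum_x P(x)\log P(x)$ term is well-defined, which is guaranteed under the standard absolute-continuity assumption implicit in the use of $D_{\text{KL}}$ throughout the paper.
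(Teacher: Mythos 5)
Your proposal is correct and follows essentially the same route as the paper's proof: both reduce to the pointwise convexity of $-\log$ (the paper phrases it as convexity of $t \mapsto \ln(P(x)/t)$), multiply by $P(x)\ge 0$, and sum over $x$. Your extra care with the $0\log 0$ convention and the $+\infty$ boundary case, which the paper omits, is a harmless refinement rather than a different argument.
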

\begin{proof}
The KL divergence is defined as:
\begin{equation*}
D_{\mathrm{KL}}(P \,\|\, Q) = \sum_x P(x) \ln \frac{P(x)}{Q(x)}.
\end{equation*}

Now consider \( Q = (1-\beta) Q_1 + \beta Q_2 \). Substituting this into the definition of KL divergence gives:
\begin{equation*}
D_{\mathrm{KL}}(P \,\|\, (1-\beta) Q_1 + \beta Q_2) = \sum_x P(x) \ln \frac{P(x)}{(1-\beta) Q_1(x) + \beta Q_2(x)}.
\end{equation*}
Using the convexity of the logarithmic function \( \ln(1/t) \) (since \( \ln(t) \) is concave, and \( \ln(1/t) = -\ln(t) \) is therefore convex), we have:
\begin{equation*}
    \ln \frac{P(x)}{(1-\beta) Q_1(x) + \beta Q_2(x)} \leq (1-\beta)\ln \frac{P(x)}{ Q_1(x)} +\beta \ln \frac{P(x)}{ Q_2(x)}.
\end{equation*}
Multiplying both sides by \( P(x) \) (noting that \( P(x) \geq 0 \)) and summing over \( x \), we obtain:
\begin{equation*}
   \sum_x P(x)\ln \frac{P(x)}{(1-\beta) Q_1(x) + \beta Q_2(x)} \leq (1-\beta)\sum_x P(x)\ln \frac{P(x)}{ Q_1(x)} +\beta \sum_x P(x)\ln \frac{P(x)}{ Q_2(x)}.
\end{equation*}
That is, $D_{\text{KL}}(P||(1-\beta)Q_1+\beta Q_2) \leq (1-\beta)D_{\text{KL}}(P||Q_1)+\beta D_{\text{KL}}(P||Q_2)$.
\end{proof}
\begin{theorem}[Generalization Bounds with Noise Ratio]\label{app:theorem2}
Under the conditions specified in Assumption \ref{app:assumption1} and Definition \ref{definition:error}, we assume that the loss $\ell(w, Z)$ is $R$-subGaussian for any $w\in \mathcal{W}\in \mathbb{R}^d$, then the expected generalization error is bounded by:
\begin{align*}
    \widetilde{\text{error}} \leq \sqrt{\frac{2R^2}{N}\left[ (1-\xi)D_{\text{KL}}(P_{\text{test}}||P_{\text{train}})+\xi D_{\text{KL}}(P_{\text{test}}||P_{\text{noise}})\right]},
\end{align*}
where $N$ is the training data size, $Z=(X,Y)$ and $\mathcal{W}$ is the space of hypotheses related to the model. $\xi$ denotes the noise ratio in training distribution and $D_{\text{KL}}(\cdot)$ represents the KL divergence between two distributions. $P_{\text{test}}=(1-\alpha)P_{\text{test}}^{\text{ID}}+\alpha P_{\text{test}}^{\text{OOD}}$.
\end{theorem}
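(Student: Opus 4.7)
The plan is to combine the information-theoretic generalization framework established in the proof of Theorem \ref{main:theorem1} with the convexity of KL divergence recorded in Lemma \ref{app:lemma3}. The key observation is that under Assumption \ref{app:assumption1}, the training samples are effectively drawn i.i.d.\ from the mixture $P_{\text{new}}(Y|X) = (1-\xi)P_{\text{train}}(Y|X) + \xi P_{\text{noise}}(Y|X)$, so the reference distribution that appears in the sub-Gaussian concentration bound must be $P_{\text{new}}$ rather than $P_{\text{train}}$.

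First, I would reapply Lemma \ref{app:lemma2} with $P_1 = P_{\text{test}}$ and $P_2 = P_{\text{new}}$. Exactly as in the derivation of inequality \eqref{eq:app1}, if $\ell(w, Z)$ is $R$-sub-Gaussian for every $w \in \mathcal{W}$, then by the i.i.d.\ structure of the training set the empirical average $g(S,W) = \tfrac{1}{N}\sum_{i=1}^N \ell(W, Z_i)$ is $R/\sqrt{N}$-sub-Gaussian. This yields the intermediate bound
\begin{equation*}
\widetilde{\text{error}} \le \sqrt{\tfrac{2R^2}{N}\, D_{\text{KL}}(P_{\text{test}} \,\|\, P_{\text{new}})}.
\end{equation*}

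Second, I would apply Lemma \ref{app:lemma3} with $P = P_{\text{test}}$, $Q_1 = P_{\text{train}}$, $Q_2 = P_{\text{noise}}$, and $\beta = \xi$, which gives
\begin{equation*}
D_{\text{KL}}(P_{\text{test}} \,\|\, P_{\text{new}}) \le (1-\xi)\, D_{\text{KL}}(P_{\text{test}} \,\|\, P_{\text{train}}) + \xi\, D_{\text{KL}}(P_{\text{test}} \,\|\, P_{\text{noise}}).
\end{equation*}
Substituting this upper bound into the preceding inequality and noting that $\sqrt{\cdot}$ is monotone yields the stated bound for $\widetilde{\text{error}}$.

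The proof is essentially a two-line composition once the pieces are in place, so there is no genuine mathematical obstacle. The main point requiring care is conceptual: one has to correctly identify that the effective training distribution after noise injection is $P_{\text{new}}$, so that the information-theoretic bound from Lemma \ref{app:lemma2} is instantiated against $P_{\text{new}}$ rather than $P_{\text{train}}$. After that, the monotonicity claim of Remark \ref{main:remark2}—that both ID and OOD error bounds grow with $\xi$—follows because $D_{\text{KL}}(P_{\text{test}}\|P_{\text{noise}})$ is typically much larger than $D_{\text{KL}}(P_{\text{test}}\|P_{\text{train}})$, so the convex combination inside the square root is increasing in $\xi$; decomposing $P_{\text{test}}$ via Assumption \ref{assumption1} (as in Lemma \ref{main:lemma1}) then transfers this monotonicity to the ID and OOD components separately.
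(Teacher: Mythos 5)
Your proposal is correct and matches the paper's proof: both first instantiate the information-theoretic bound against $P_{\text{new}}$ in place of $P_{\text{train}}$ to obtain $\widetilde{\text{error}} \le \sqrt{\tfrac{2R^2}{N}\,D_{\text{KL}}(P_{\text{test}}\|P_{\text{new}})}$, then apply the convexity of KL divergence (Lemma \ref{app:lemma3}) with $\beta=\xi$, $Q_1=P_{\text{train}}$, $Q_2=P_{\text{noise}}$ to split the bound. Your explicit emphasis that $P_{\text{new}}$ is the effective training distribution is the right conceptual framing and is exactly what the paper's one-line substitution into Eq.\ \eqref{eq:app1} amounts to.
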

\begin{proof}
    Replace $P_{\text{train}}(Y|X)$ in Eq. (\ref{eq:app1}) with $P_{\text{new}}(Y|X)$, we have:
$$ \widetilde{\text{error}} \leq \sqrt{\frac{2R^2}{N}D_{\text{KL}}(P_{\text{test}}||P_\text{new})}.$$
To apply Assumption \ref{app:assumption1} and Lemma \ref{app:lemma3}, we let $\beta=\xi$, $P=P_{\text{test}},Q=P_{\text{new}}$, that is, $Q_1=P_{\text{train}},Q_2=P_{\text{noise}}$. Therefore, we will get:
\begin{align*}
    \widetilde{\text{error}} \leq \sqrt{\frac{2R^2}{N}\left[ (1-\xi)D_{\text{KL}}(P_{\text{test}}||P_{\text{train}})+\xi D_{\text{KL}}(P_{\text{test}}||P_{\text{noise}})\right]}.
\end{align*}
We complete the proof.
\end{proof}
\newpage

\section{Evaluation and Representative Examples} \label{app:example}
\textbf{ID/OOD Evaluation.} To better evaluate the generalization capacity of the model, we assess its performance on both ID and OOD data. (1) ID generalization aims to determine whether the model has correctly learned the latent patterns by evaluating its ability to complete previously unseen two-hop facts $S_{\text{ID}_{\text{test}}}^{(2)}$. (2) OOD generalization aims to assess the systematicity \citep{lake2018generalization} acquired by the model, specifically its ability to apply learned patterns to knowledge irrespective of its distribution. This is done by testing the model on facts $S_{\text{OOD}}^{(2)}$. If the model performs well on ID data, it may have memorized or learned patterns present in the training data $T$. \textbf{However, strong performance on OOD data indicates that the model has learned the latent patterns, as $T$ contains only atomic facts} $S_{\text{OOD}}$, \textbf{not} $S_{\text{OOD}}^{(2)}$. Representative examples highlighting the distributional differences between ID and OOD data are presented below.

The key distinction between ID and OOD data lies in whether their distributions are the same. We provide a more prominent example with distribution differences here. The training set is composed of \textbf{$S_{\text{ID}}$, $S_{\text{OOD}}$} and \textbf{$S_{\text{ID}_{\text{train}}}^{(2)}$}.

\textbf{$S_{\text{ID}}$:} \\
$(\text{Paris}, \text{CapitalOf}, \text{France}),(\text{France}, \text{LocatedIn}, \text{Europe}),\\(\text{Berlin}, \text{CapitalOf}, \text{Germany}),(\text{Germany}, \text{LocatedIn}, \text{Europe}).$

\textbf{$S_{\text{OOD}}$:}\\
$(\text{Lima}, \text{CapitalOf}, \text{Peru}),$
$(\text{Peru}, \text{HasNaturalFeature}, \text{Andes Mountains})$.

\textbf{$S_{\text{ID}_{\text{train}}}^{(2)}$,} a uniformly random subset of the inferred facts derived from $S_{\text{ID}}$:\\
$(\text{Paris}, \text{CapitalOfCountryLocatedIn}, \text{Europe})$.

\textbf{$S_{\text{ID}_{\text{test}}}^{(2)}$,} previously unseen inferred facts derived from $S_{\text{ID}}$ (ID generalization):\\
$(\text{Berlin}, \text{CapitalOfCountryLocatedIn}, \text{Europe})$.

\textbf{$S_{\text{OOD}}^{(2)}$,} previously unseen inferred facts derived from $S_{\text{OOD}}$ (OOD generalization):\\
$(\text{Lima}, \text{CapitalOfCountryWithNaturalFeature}, \text{Andes Mountains})$.

\textcolor{blue}{\textbf{Key Changes.}} This OOD data setup requires the model to tackle greater knowledge distribution differences and reasoning challenges during evaluation. (1) Change in Relation Types: New relation types, such as``HasNaturalFeature," are introduced in the OOD data. (2) Complexity of Reasoning Paths: Reasoning paths involving natural features are added, requiring the model not only to reason but also to generalize to new types of knowledge.

\section{More Experience Results}
\subsection{Data Generation Process} \label{app:data_process}
Specifically, for one-hop facts, a random knowledge graph $\mathcal{G}$ is constructed with 
$|\mathcal{E}|$ entities and $|\mathcal{R}|=200$ relations. Each entity, acting as the head ($h$), is linked to 20 unique relations, with each relation connecting to another randomly selected entity serving as the tail ($t$). One-hop facts correspond to the $(h,r,t)$ triplets in $\mathcal{G}$. $S \subset \mathcal{G}\subset \mathcal{S}$.
These triplets are partitioned into two disjoint sets: $S_{\text{ID}}$ (95\%) and $S_{\text{OOD}}$ (5\%), which are used to deduce the ID/OOD two-hop facts ($    \forall h,b,t \in \mathcal{E}, \forall r_1,r_2 \in \mathcal{R}$): $(h,r_1,b)\oplus(b,r_2,t)\Longrightarrow (h,r_1,r_2,t).$

\subsection{Training Details of Controllable Data}\label{app:train details1}
\textbf{Model.} The model we employ is a standard decoder-only transformer as in GPT-2 \citep{radford2019language}. We use 8-layer for Figure \ref{figure:cot vs noncot} (Left, Center Left, Center Right) and Figure \ref{figure:circuit change}, while 2,4,8-layer for Figure \ref{figure:cot vs noncot} (Right) and 2-layer for Figures \ref{figure:noise effect} and \ref{figure:circuit_layer2}. The other hyperparameters are consistent with those described in Section \ref{sec:pre}.

\textbf{Tokenization.} In our main content (Section \ref{subsec:cot vs. nocot}), we assign a unique token to each relation/entity by default. This is because \citet{wang2024grokking} find that different tokenizations affect the results in rather expected ways, and do not influence
the reasoning generalization findings. We also  validate our findings in the real-world entity-tokenization in Section \ref{sec:real}.

\textbf{Optimization.} Optimization is done by AdamW \citep{loshchilov2019decoupled} with learning rate $10^{-4}$, batch size $512$, weight decay $0.1$ and $2000$ warm-up steps. Notably, models are trained for a large number of epochs/steps beyond the point where training performance saturates. All runs were done with PyTorch \citep{paszke2019pytorch} and Huggingface Transformers \citep{wolf2020transformers} on NVIDIA GeForce RTX 3090.

\subsection{Two-Layer Model Circuit}\label{app:2_circuit}
\begin{figure}[ht]
\vskip -0.in
\begin{center}
\centerline{\includegraphics[width=0.95\columnwidth]{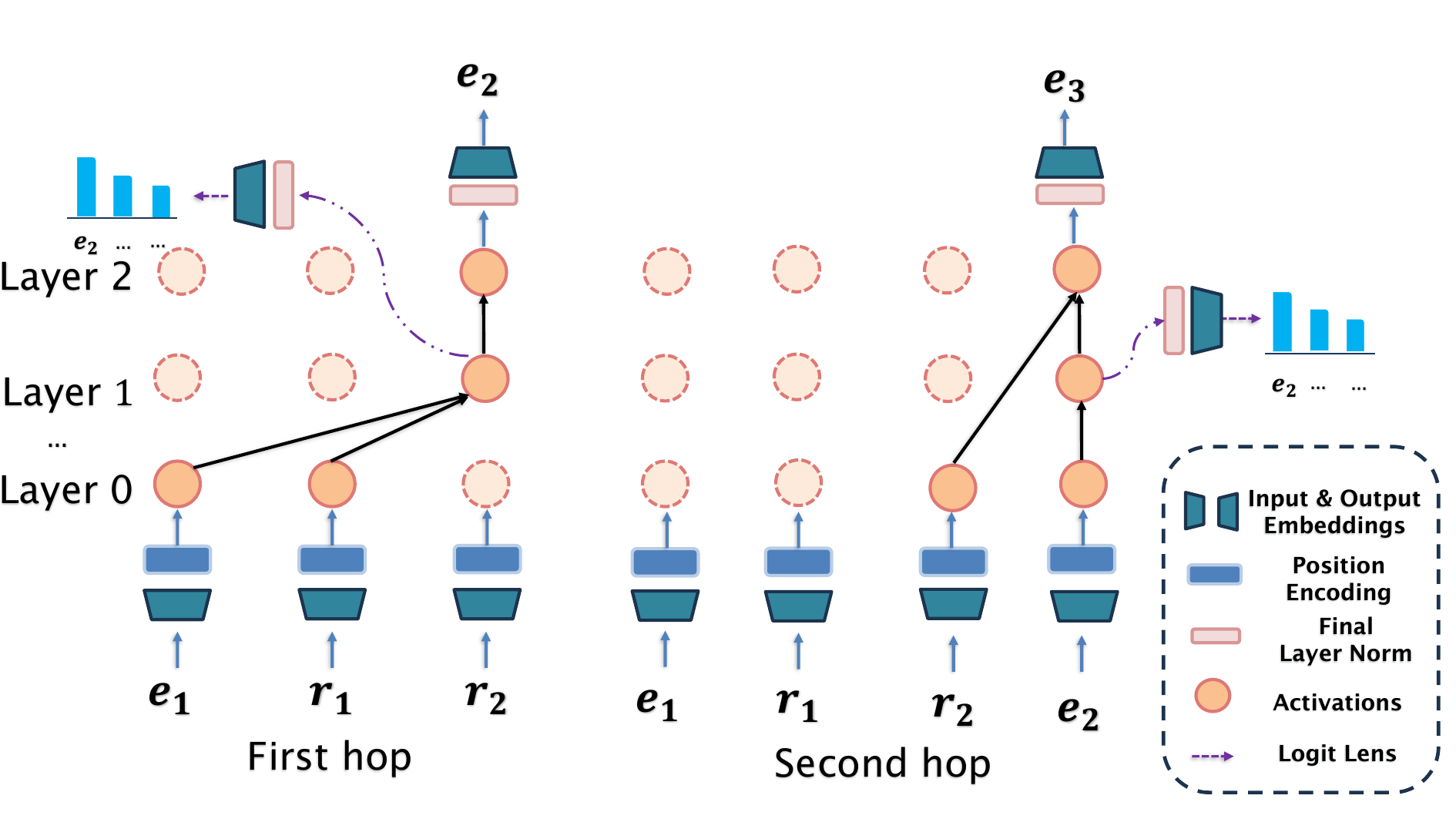}}
\caption{The two-stage compositional circuit (layer:2) for two-hop facts. We use logit lens
to interpret individual states, and use causal tracing to measure the strength of connections between states. \textbf{It is evident that a two-layer model can learn compositional circuits from CoT training}. This aligns with \citet{cabannes2024iteration}, who describe how a certain distribution of weights within the first two attention layers of a transformer, referred to as an “iteration head,” enables a transformer to solve iterative tasks with CoT reasoning with relative ease. We identify CoT reasoning circuits in a transformer with only 2 layers. The output of first stage is autoregressively used for the second.}
\label{figure:circuit_layer2}
\end{center}
\vskip -0.2in
\end{figure}
For CoT prompting, the effective depth of the transformer increases because the generated outputs are looped back to the input \citep{feng2023towards}.

\subsection{Two-Hop to Muti-Hop}\label{app:muti-hop}
In Sections \ref{subsec:cot vs. nocot} to \ref{sec:circuit}, we primarily focus on two-hop facts. In this subsection, we shift our focus to multi-hop scenarios: Can a model that has only encountered two-hop facts during the CoT training phase generalize to three-hop facts\footnote{It can naturally extend to multi-hop scenarios once thoroughly analyzed.}?

\textbf{Experiment Setup.} The rule of (three-hop) composition is $(e_1,r_1,e_2)\oplus(e_2,r_2,e_3)\oplus(e_3,r_3,e_4)\Longrightarrow (e_1,r_1,r_2,r_3,e_4)$, $\forall e_1,e_2,e_3,e_4 \in \mathcal{E}, \forall r_1,r_2,r_3 \in \mathcal{R}$. In CoT training, we only use one-hop/two-hop facts ($e_1, r_1 \xrightarrow{\text{predict}} e_2$; $e_1, r_1,r_2 \xrightarrow{\text{predict}} e_2,e_3$), and we test whether the model can generalize to reasoning in three-hop facts ($e_1, r_1,r_2,r_3\xrightarrow{\text{predict}} e_2,e_3,e_4$). Other settings are same in Section \ref{sec:pre}.

\newpage

\textbf{Results.} A model that has only encountered two-hop data during CoT training cannot directly generalize to three-hop scenarios. \textit{Intuitively, if we consider $e_1,r_1,r_2\rightarrow e_2,e_3$ and $e_2,r_2,r_3\rightarrow e_3,e_4$ as new atomic facts and $e_1, r_1,r_2,r_3\rightarrow e_2,e_3,e_4$ as their compositional combination, models trained exclusively on atomic facts exhibit limited reasoning capability for compositional facts}. However, when we artificially split a three-hop fact into two two-hop facts for testing, the model is also able to generalize effectively. In other words, we test $e_1, r_1,r_2\xrightarrow{\text{predict}} e_2,e_3$ and $e_2, r_2,r_3\xrightarrow{\text{predict}} e_3,e_4$ separately, and when both are correct, we consider $e_1, r_1,r_2,r_3\xrightarrow{\text{predict}} e_2,e_3,e_4$ to be correct. These align with \citep{cabannes2024iteration}: CoT relates to reproducing reasoning patterns that appear in the training set.

\textbf{Discussion.} For cite references, \citet{zhu2024towards} emphasize that the model fails to directly deduce $A_i\rightarrow C_i$ when the two intermediate steps $A_i\rightarrow B_i$, $B_i\rightarrow C_i$ are trained separately, this corresponds to that the absence of three-hop instances in the training data significantly limits its generalization capability to three-hop queries during evaluation.  Moreover, when we include three-hop facts in the training set ($|\text{three-hop}:\text{one-hop}|=3.6$), the model achieves over 0.9 accuracy on both ID and OOD test sets. Building upon this, we further conduct circuit analysis on the model trained with three-hop facts, revealing a three-stage mechanism: the first stage activates $e_1,r_1$, the second stage activates $r_2,e_2$, and the third stage activates $r_3,e_3$.
Besides, we agree with \citep{zhu2024towards} that ``the reversal curse is a consequence of the (effective) model weights asymmetry", autoregressive models struggle with parallel generation of subtasks (see Section \ref{sec:circuit} for details). The generalization circuit we discovered essentially corresponds to the \textbf{reuse of the model's weights}. We also note the emergence of ``Loop-transformer Reasoning" \citep{yu2025enhancingautoregressivechainofthoughtloopaligned} works. Our results further support the validity of this research direction.



\begin{tcolorbox}[colback=white!5, colframe=blue!15!white]
\textbf{Insight:} When the intermediate reasoning results (data patterns) from explicit CoT training are highly aligned with or closely match the intermediate reasoning required for final inference during testing, the model's generalization ability is significantly enhanced. This could explain why the researchers behind DeepSeek-R1 \citep{deepseekai2025deepseekr1incentivizingreasoningcapability} construct and collect a small amount of \textit{long CoT} data to fine-tune the model in the \textit{Cold Start} phase.
\end{tcolorbox}

\subsection{Causality Analysis Details}\label{app:causal details}
\textbf{The methodological descriptions here serve to clarify the approach we used, not to attribute these methods to our own work.} We mirror the recent practice \citep{wang2023interpretability,wang2024grokking}, where the causal tracing process consists of three steps (for every stage in Section \ref{sec:circuit}):

(1) In the normal run, we capture the model's hidden state activations for a regular input $(e_1,r_1,r_2)/(e_1,r_1,r_2,e_2)$. Notably, as the model maintains perfect training accuracy throughout the CoT training process, the final prediction invariably aligns with the ground truth\footnote{For simplicity, when we refer to a state as a token, we are indicating the top-ranked token of that state as determined by the logit lens.} —— bridge entity $e_2$ for the first-hop stage and tail entity $e_3$ for the second-hop stage.

(2) During the perturbed run\footnote{ For the perturbation, studies have explored
adding noise to the input \citep{meng2022locating} and replacing key tokens with semantically close ones \citep{vig2020investigating,feng2024binding}. We
adopt token replacement which avoids unnecessary distribution shifts \citep{zhang2024towards}.}, the model is given a slightly modified input that alters its prediction, and the hidden state activations are recorded once again. 

Specifically, for the hidden state of interest, we modify the input token at the corresponding position by substituting it with a random alternative of the same type (e.g., $r_1\rightarrow r_1^{'}$ ), which results in a different target prediction (e.g., $e_2\rightarrow e_2^{'},e_3\rightarrow e_3^{'}$ ).

(3) Intervention. During the normal run, we intervene the activation of the state of interest by substituting it with its corresponding activation from the perturbed run. After completing the remaining computations, we check whether the target state (determined as the top-1 token via the logit lens) has changed. If perturbing a node does not alter the target state (top-1 token
through the logit lens), we prune the node.

\subsection{CoT Training with Noise}\label{app:noise result}
\begin{figure}[ht]
  \centering
    \begin{subfigure}[b]{0.245\textwidth}
        \centering
        \includegraphics[height=3.55cm]{  pics/both.noise.0.05.pdf}
    \end{subfigure}
    \begin{subfigure}[b]{0.245\textwidth}
        \centering
        \includegraphics[height=3.55cm]{  pics/both.noise.0.1.pdf}
    \end{subfigure}
    \begin{subfigure}[b]{0.245\textwidth}
        \centering
        \includegraphics[height=3.55cm]{  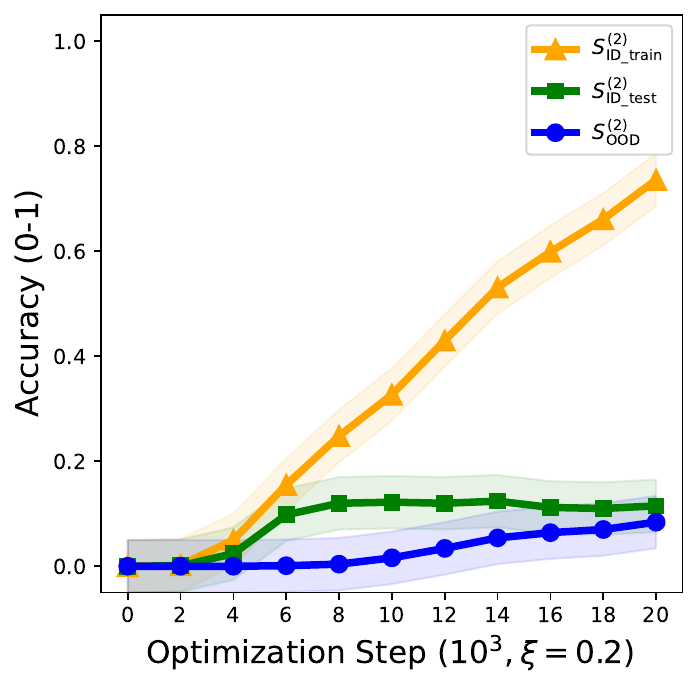}
    \end{subfigure}
    \begin{subfigure}[b]{0.245\textwidth}
        \centering
        \includegraphics[height=3.55cm]{  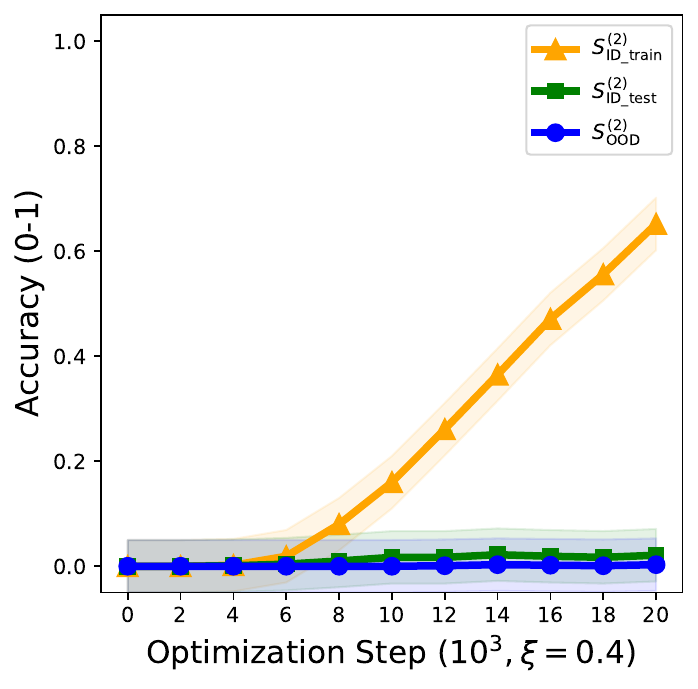}
    \end{subfigure} 
  \caption{The model's accuracy on training and testing two-hop reasoning facts at different noise ratios (both hops are noisy). This figure compares the results for $\xi$ values of $0.05, 0.1, 0.2$, and $0.4$.}
\label{figure:both_noise}
\end{figure}

\begin{table}[ht]
\caption{The model’s accuracy (100\%) on all reasoning facts at different noise ratios (only the second hop is noisy). This table compares the results for $\xi$ values of 0.05, 0.2, 0.4, 0.6 and 0.8 with 20,000 optimization steps in Figure \ref{figure:noise effect} (Left Part).}
\label{table:only_t_noise}
\begin{center}
\begin{small}
\begin{sc}
\begin{tabular}{lccccr}
\toprule
$\xi$ & $S_{\text{ID}}$ & $S_{\text{OOD}}$ & $S_{\text{ID}_{\text{train}}}^{(2)}$ & $S_{\text{ID}_{\text{test}}}^{(2)}$ &$S_{\text{OOD}}^{(2)}$\\
\midrule
0.05    & 99.9 & 100.0 & 99.3 & 99.0 & 97.8 \\
0.2    & 100.0 & 99.9 & 98.9 & 95.1 & 77.6 \\
0.4    & 100.0 & 99.9 & 98.0 & 86.6 & 49.1  \\
0.6  &  100.0 & 99.5 & 96.8 &70.4 & 30.9 \\
0.8 & 100.0 &  99.2 & 97.3 & 42.6 & 14.8 \\
\bottomrule
\end{tabular}
\end{sc}
\end{small}
\end{center}
\vskip -0.1in
\end{table}
\begin{table}[ht]
\caption{The model’s accuracy (100\%) on all reasoning facts at different noise ratios (both the first and the second hops are noisy). This table compares the results for $\xi$ values of 0.05, 0.1, 0.2 and 0.4 with 20,000 optimization steps in Figure \ref{figure:both_noise}.}
\label{table:both_noise}
\begin{center}
\begin{small}
\begin{sc}
\begin{tabular}{lccccr}
\toprule
$\xi$ & $S_{\text{ID}}$ & $S_{\text{OOD}}$ & $S_{\text{ID}_{\text{train}}}^{(2)}$ & $S_{\text{ID}_{\text{test}}}^{(2)}$ &$S_{\text{OOD}}^{(2)}$\\
\midrule
0.05    & 98.3 & 99.7 & 93.7 & 55.4 & 53.7 \\
0.1  &  97.5 & 97.4 & 85.4 &29.8 & 27.1 \\
0.2    & 95.0 & 92.2 & 73.6 & 11.5 & 8.4 \\
0.4    & 84.5 & 72.4 & 65.2 & 2.1 & 0.3  \\
\bottomrule
\end{tabular}
\end{sc}
\end{small}
\end{center}
\vskip -0.1in
\end{table}

\textbf{Results.} We analyze different $\xi$ (noise ratio) candidate sets for the two situations: $\{0.05,0.2,0.4,0.6,0.8\}$ for only the second hop is noisy and $\{0.05,0.1,0.2,0.4\}$ for both hops are noisy. The comparison results are as follows:

(1) Figure \ref{figure:noise effect} and Table \ref{table:only_t_noise} clearly demonstrate the impact of only the second-hop noise on ID and OOD generalization ($S_{\text{ID}_{\text{test}}}^{(2)}$ and $S_{\text{OOD}}^{(2)}$). Overall, under CoT training conditions, the model is still able to achieve systematic generalization from noisy training data, but its generalization ability decreases as the noise ratio increases. More specifically, as training progresses, OOD generalization initially remains constant and then increases, while ID generalization first increases and then decreases. The decrease in ID generalization corresponds to the increase in OOD generalization. However, the final performance of both ID and OOD generalization decreases as the ratio increases. In particular, when the noise ratio ($\xi<0.2$) is relatively small, the model is almost unaffected, demonstrating the robustness of CoT training. Furthermore, we also use the method in Section \ref{sec:circuit} to examine the circuits. Since we only add noise in the second hop, the circuit in the first-hop stage is learned relatively well, while the circuit in the second-hop stage is more heavily affected by noise. 

(2) Figure \ref{figure:both_noise} and Table \ref{table:both_noise} show a comparison of the results for both
hop noise $\xi$ values of $0.05, 0.1, 0.2$, and $0.4$. Adding noise to both hops has a much stronger suppressive effect on the model's generalization compared to adding noise only to the second hop. A noise ratio greater than 0.2 is enough to nearly eliminate both ID and OOD generalization ability.

(3)  Table \ref{table:only_t_noise} and Table \ref{table:both_noise} demonstrate that adding noise to both hops exerts a significantly stronger suppressive effect on the model’s generalization compared to introducing noise solely to the second hop. When the noise ratio is 0.2, the accuracy of $S_{\text{ID}_{\text{test}}}^{(2)}$ in Table \ref{table:only_t_noise} is 95.1\%, and the accuracy of $S_{\text{OOD}}^{(2)}$ is 77.6\%. However, in Table \ref{table:both_noise}, the accuracies of $S_{\text{ID}_{\text{test}}}^{(2)}$ and $S_{\text{OOD}}^{(2)}$ are only 11.5\% and 8.4\%. This indicates that the harm caused by CoT training data with completely incorrect reasoning steps is enormous.

(4) To sum up, even with noisy training data, CoT training can still enable the model to achieve systematic generalization when the noise is within a certain range\footnote{\citet{deepseekai2025deepseekr1incentivizingreasoningcapability} collect about 600k long CoT training samples. Errors during the process are acceptable, as it is impossible for humans to manually write 600k high-quality solutions. Moreover, through reinforcement learning, OpenAI O1 \citep{openai2024o1} learns to hone its chain of thought and refine the strategies it uses. It learns to recognize and correct its mistakes.}. Especially when the noise ratio is small, these noisy data can still help the model learn generalization circuits. Moreover, 
we further analyze the connection between noisy samples and the samples with prediction errors from the model, and give the impact of only the first-hop is noisy in Appendix \ref{app:noise discuss}.
\begin{tcolorbox}[colback=white!5, colframe=blue!15!white]
\textbf{Insight:} CoT training still enables systematic generalization with noisy data, highlighting that data quality outweighs the method itself. The training bottleneck lies in collecting or synthesizing complex \textit{long CoT} solutions, with some errors being acceptable.
\end{tcolorbox}

\subsection{Noise Discussion}\label{app:noise discuss}
In Section \ref{sec:noise}, we demonstrated that adding noise to both hops exerts a significantly stronger suppressive effect on the model's generalization compared to introducing noise solely to the second hop. Here, we further investigate the impact of scenarios where only the first hop is noisy. We introduce noise to $S_{\text{ID}_{\text{train}}}^{(2)}$ by randomly selecting a valid entity, where the gold training target is $e_1, r_1, r_2, e_2, e_3$. In this case, only the first hop is noisy, resulting in $e_1, r_1, r_2, e_2^{\text{noise}}, e_3$.

\textbf{Only the First-Hop is Noisy.} Table \ref{table:only_b_noise} highlights that as $\xi$ increases, performance generally declines across all datasets, with the most significant drops observed in $S_{\text{ID}_{\text{test}}}^{(2)}$ and $S_{\text{OOD}}^{(2)}$ at higher noise levels. For instance, when $\xi=0.4$, the accuracy for $S_{\text{OOD}}^{(2)}$ falls to $0.3\%$, indicating a substantial degradation in model generalization under noisy conditions. This degradation indicates that errors in intermediate steps within the CoT training data have a greater impact. However, the model still retains a certain level of generalization capability under low noise conditions.

\begin{table}[ht]
\caption{The model’s accuracy (100\%) on all reasoning facts at different noise ratios (only the first hop is noisy). This table compares the results for $\xi$ values of 0, 0.05, 0.1, 0.2, and 0.4. Regarding the optimization (20,000 steps) and model settings, we remain consistent with Section \ref{sec:noise}. All values are averaged over 3 random seeds.}
\label{table:only_b_noise}
\begin{center}
\begin{small}
\begin{sc}
\begin{tabular}{lccccr}
\toprule
$\xi$ & $S_{\text{ID}}$ & $S_{\text{OOD}}$ & $S_{\text{ID}_{\text{train}}}^{(2)}$ & $S_{\text{ID}_{\text{test}}}^{(2)}$ &$S_{\text{OOD}}^{(2)}$\\
\midrule
0  &  100.0 & 99.9 & 99.9 &100.0 & 99.4 \\
0.05    & 98.4 & 99.4 & 94.3 & 53.3 & 51.3 \\
0.1 & 97.9 &  98.2 & 86.3 & 29.7 & 29.5\\
0.2    & 97.0 & 94.5 & 74.9 & 10.6 & 7.5 \\
0.4    & 89.6 & 77.0 & 68.7 & 1.8 & 0.3         \\
\bottomrule
\end{tabular}
\end{sc}
\end{small}
\end{center}
\vskip -0.1in
\end{table}

\textbf{Noisy Samples Affect the Probability Distribution of the Output Vocabulary Space.}
As shown in Table \ref{table:only_b_noise}, adding noise to $S_{\text{ID}_{\text{train}}}^{(2)}$ can even impact the model's basic knowledge ($S_{\text{ID}}$ and $S_{\text{OOD}}$). We further analyze the relationship between noisy samples and the samples with prediction errors from the model. Namely, we use a sample 
$\textcolor{blue}{e_{1567},r_{32}},r_{23}\xrightarrow{\text{gold label}}\textcolor{blue}{e_{1002}},e_{404}$ to validate the model after CoT training with $\xi=0.05$. The model’s erroneous output is $\textcolor{blue}{e_{1567},r_{32}},r_{23}\xrightarrow{\text{wrong predict}}\textcolor{blue}{e_{1640}},e_{665}$, and we identify the noisy sample in the training set as: $\textcolor{blue}{e_{1567},r_{32}},r_{132}\xrightarrow{\text{noisy label}}\textcolor{blue}{e_{1640}},e_{851}$. We check the probability distribution of the output vocabulary space: (1) For one-hop fact $\textcolor{blue}{e_{1567},r_{32}} (\xrightarrow{\text{gold label}}\textcolor{blue}{e_{1002}}),$ 
the token with the highest output probability is $e_{1002}$, followed by $e_{1640}$ as the second highest. Adding excessive noise can bias the previously learned one-hop facts. (2) For two-hop fact $\textcolor{blue}{e_{1567},r_{32}},e_{\text{any1}}, (\xrightarrow{\text{gold label}}\textcolor{blue}{e_{1002}},e_{\text{any2}}),$ the token with the highest output probability is $e_{1640}$, followed by $e_{1002}$ as the second highest. The structure of such noisy data can indeed contribute to the learning of generalization performance. However, the addition of noise can negatively impact the reasoning of related facts.

\subsection{Details of Real-World Data Verification}\label{app:real data details}
\textbf{Data Descriptions.} The test data is sourced from GSM8K \citep{gsm8k} (containing 1,319 samples), while the training data is derived from MetaMathQA \citep{yu2024metamath} (comprising 395K samples), which enhances data quality and improves model performance. Considering training cost and following \citet{pissa}, we use only the first 100k samples for 1 epoch of training. 

\textbf{Training Formats.} The data has been converted into alpaca format \citep{alpaca} for ease of use. Unlike the non-CoT format, which simply outputs "The Answer is ...", the CoT format provides reasoning steps alongside the answer.

\textbf{Hyperparameter Configurations} We employ LoRA \citep{lora} with rank 16 and alpha 16, without dropout (0). The model is trained for 1 epoch with a maximum sequence length of 512 tokens. The training uses a batch size of 4 per device without gradient accumulation (steps=1). The optimization employs a learning rate of 2e-5 with no weight decay (0) and a warmup ratio of 0.03. The framework and hardware information we used is provided in Appendix \ref{app:train details1}.

We randomly inject noise with a probability of noise ratio. Notably, as shown in Table \ref{table:comp} (Right), the model was trained for only 1,000 steps.

\textbf{Noise Types.} Only two types of noise are added: (1) Step skipping (replacing intermediate steps with "..."). (2) Incorrect step results (keeping the left-hand expression while modifying the right-hand result). Below we show an example comparing the results before and after noise injection.

\begin{lstlisting}[style=PythonStyle]
sample_data = {
'output': 'At first, there were 10 snowflakes.\nEvery 5 minutes, an additional x snowflakes fell, so after t minutes, there were 10 + (t/5)*x snowflakes.\nWe are given that there were 58 snowflakes, so we can write: 10 + (t/5)*x = 58.\nSolving for t, we get: (t/5)*x = 48.\nDividing both sides by x, we get: t/5 = 48/x.\nMultiplying both sides by 5, we get: t = (5*48)/x.\nWe are given that t = 60 minutes, so we can write: (5*48)/x = 60.\nSolving for x, we get: x = (5*48)/60.\nSimplifying the right side, we get: x = 4.\nThe value of x is 4.\n#### 4\nThe answer is: 4',
'instruction': 'There were 10 snowflakes at first...',
'type': 'GSM_FOBAR',
'input': ''
}
noise_data = {
'output': 'At first, there were 10 snowflakes.\n'
'Every 5 minutes, an additional x snowflakes fell, so after t minutes, there were 10 + (t/5)*x snowflakes.\n'
'We are given that there were 58 snowflakes, so we can write: 10 + (t/5)*x = 57.\n'  # Incorrect step results (58->57)
'Solving for t, we get: (t/5)*x = ...\n'  # Step skipping
'Dividing both sides by x, we get: t/5 = 49/x.\n'  # Incorrect step results (48->49)
'Multiplying both sides by 5, we get: t = (6*48)/x.\n'  # Incorrect step results (5->6)
'We are given that t = 60 minutes, so we can write: (5*48)/x = 59.\n'  # Incorrect step results (60->59)
'Solving for x, we get: x = (5*48)/61.\n'  # Incorrect step results (60->61)
'Simplifying the right side, we get: x = ...\n'  # Step skipping
'The value of x is 4.\n' 
'#### 4\n'  
'The answer is: 4',
'instruction': 'There were 10 snowflakes at first...',
'type': 'GSM_FOBAR',
'input': ''}
\end{lstlisting}
\subsection{Layer-Wise Gradients Change}\label{app:gradient}
\begin{figure}[ht]
\vskip 0.2in
\begin{center}
\centerline{\includegraphics[width=1.0\columnwidth]{  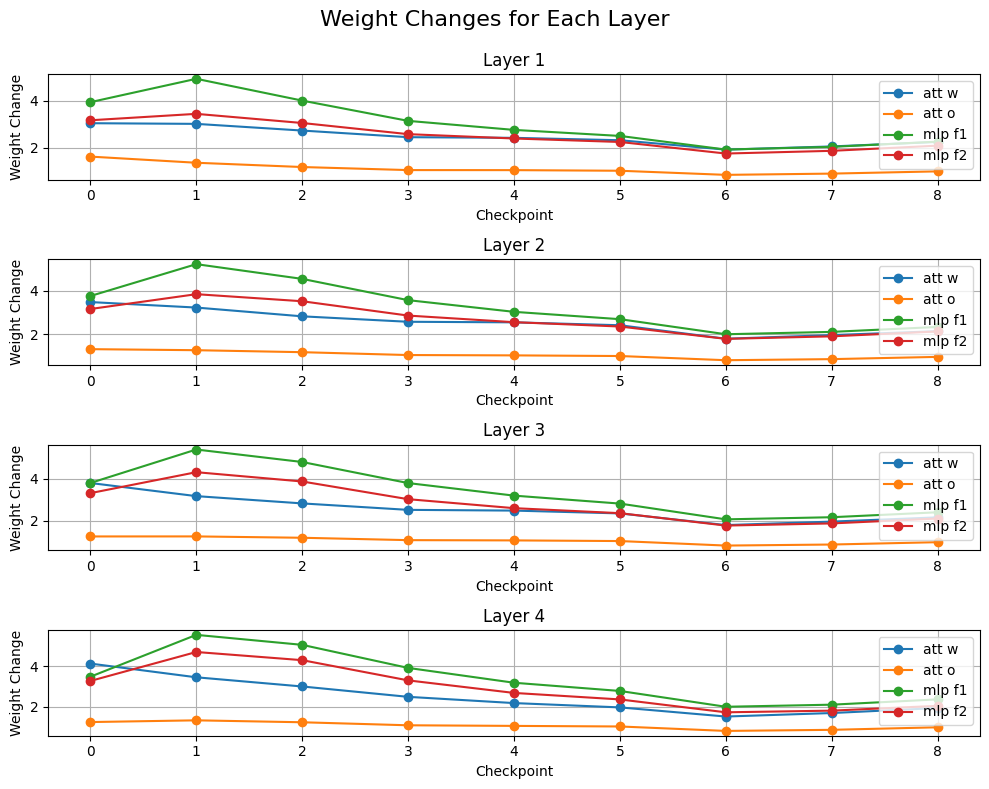}}
\caption{The nuclear norm of gradients across different layers when training with CoT. This aligns with the Figure \ref{figure:cot vs noncot} (Right), we plot the setting with $\text{layer}=4,\lambda=7.2$. The checkpoint=4 means the optimization steps=4,000, when the OOD test accuracy begins to grow. The ID generalization phase $(0-4)$ leads to larger gradients across layers compared to the OOD generalization phase $(5-8)$. This aligns with the results \citep{li2024happenedllmslayerstrained}, indicating that slow thinking with CoT results in smaller gradients and enables the gradients to effectively distinguish between correct and irrelevant reasoning paths.}
\label{figure:gradient}
\end{center}
\vskip -0.2in
\end{figure}

\textbf{Nuclear Norm:} The nuclear norm of gradient ($G$) is defined as the $l_1$ norm of the singular values, which reflects the sparsity of the spectrum and serves as a convex surrogate of the matrix rank. Hence, it does not only quantify the gradient magnitude but also the concentration of the spectrum on its top few singular values, which is vital to understand the gradient patterns in each layer, i.e., $||G||_{\text{nuclear}}=\sum_{j=1}^{\min(m,n)}|\sigma_j|$, $\sigma_j (j=1,...,)$  are singular values of $G$.
\newpage
\section{Realistic Data Verification}\label{sec:real}
At a high level, our study so far paves the way for a deeper understanding and enhancement of the transformer's generalization abilities through CoT training on controlled data distributions. However, real-world training data distributions are often more complex. Accordingly, we empirically validate the insights presented in Sections \ref{subsec:cot vs. nocot} and \ref{sec:noise} on real-world datasets.

$\diamond$ \textbf{Experiment setup with real-world data.} Following\footnote{Considering training cost, this part uses only the first 100k samples for only one epoch of training like \citep{pissa}.} prior work \citep{yu2024metamath,pissa}, we perform LoRA-based fine-tuning \citep{lora} (rank=16) of LLaMA3.1-8B \citep{AI@Meta2024Llama3.1} and Qwen2.5-3B \citep{qwen2.5} using MetaMathQA \citep{yu2024metamath}, then evaluate their mathematical reasoning performance on the GSM8K \citep{gsm8k} test set. (1) For training without CoT, the model's target consists solely of the final answer, whereas training with CoT incorporates the reasoning steps. (2) To avoid alterations to the reasoning format, we only inject noise into the mathematical expression components of reasoning steps, randomly selecting between two noise types: a) step skipping only; b) incorrect step results. The details of data descriptions, training formats, noise types and hyperparameter configurations are provided in Appendix \ref{app:real data details}.

\begin{table}[ht]
  \caption{\textbf{Left:} GSM8K performance with/without CoT fine-tuning (bold: best results). \textbf{Right:} the impact of mathematical expression noise ratio. The Qwen2.5-3B is fine-tuned for only 1000 steps to reduce training overhead. Even with the noise ratio=1, the model maintains improved reasoning generalization through fine-tuning with CoT.}
  \label{table:comp}
  \begin{minipage}[t]{0.66\textwidth}
\begin{tabular}{p{1cm}cccc}
\toprule
 \textbf{Model}   & \textbf{ }  & \textbf{ } & \textbf{Method}     & \textbf{GSM8K (100\%)}  \\
\midrule
\multirow{2}{*}{Llama3.1-8B} & &
 & before fine-tune &$33.13$   \\& &
& fine-tuning without CoT &$22.79$   \\& &
 & fine-tuning with CoT &$\mathbf{74.01}$   \\
\midrule
\multirow{2}{*}{Qwen2.5-3B}& &
 & before fine-tune &$14.56$   \\& &
& fine-tuning without CoT &$19.21$   \\& &
 & fine-tuning with CoT &$\mathbf{78.81}$   \\
\bottomrule
\end{tabular}
  \end{minipage}
  \begin{minipage}[t]{0.05\textwidth}
    \centering
    \begin{tabular}{cc}
    \toprule
    \multicolumn{2}{c}{Qwen2.5-3B} \\
    \cmidrule(r){1-2}
    Noise Ratio & GSM8K (100\%)  \\
    \midrule
    0 & 69.37      \\
    0.4 & 69.06       \\
    0.8 & 68.98       \\
    1.0 & 68.83   \\
    \bottomrule
    \end{tabular}
  \end{minipage}
  \hfill
\end{table}

$\diamond$ \textbf{Verification Results.} (1) Table \ref{table:comp} (Left) shows that CoT fine-tuning significantly improves accuracy on GSM8K benchmark compared to answer-only fine-tuning and pre-trained baselines, even fine-tuning without CoT may degrade the original reasoning capabilities of the base model. For instance, LLaMA3.1-8B's performance dropped from 33.13 to 22.79 after answer-only fine-tuning. (2) In Table \ref{table:comp} (Right), since the original reasoning patterns in the training data are preserved and noise is injected solely into the mathematical expression components of the reasoning steps, the model achieves improved reasoning generalization through CoT fine-tuning. Notably, fine-tuning with CoT (noise ratio = 1) results in a significantly higher accuracy of 68.83\%, compared to only 19.21\% for fine-tuning without CoT. Together, these findings substantiate the claims made in Sections \ref{subsec:cot vs. nocot} and \ref{sec:noise} with solid empirical evidence.  Additional real-world validations are also provided in Appendix \ref{app:more real results}.
\newpage
\subsection{More Realistic Data Results}\label{app:more real results}
\textbf{Experiment Setup.} We extend our experiments to Llama2-7B \citep{touvron2023llama} to verify whether the insights (that the model can achieve systematic composition generalization through CoT training) hold true for real-world datasets. We use the dataset provided by \citep{biran2024hoppinglateexploringlimitations}, which contains 82,020 two-hop queries based on data from Wikidata \citep{Wikidata}. To exclude the influence of the model's inherent knowledge, we filter the data and split the training and test set, the filtering process can be seen as follows (\textbf{the methods described clarify our approach and data, without implying originality}):

Referring to \citet{biran2024hoppinglateexploringlimitations}, we aim to filter out cases where no latent reasoning occurs. Given a two hop query $((e_1,r_1,e_2),(e_2,r_2,e_3)$ we test two prompts constructed to detect and filter out cases where the model performs reasoning shortcuts \citep{xu-etal-2022-model,wang-etal-2023-causal,ju-etal-2024-investigating}. (1) The first prompt is $((,r_1,e_2),(e_2,r_2,e_3)$ (i.e.,
the query without $e_1$), aimed at filtering out cases
where the model predicts generally popular entities. (2) The second prompt is $((e_1,,e_2),(e_2,r_2,e_3)$ (i.e.,
the query without $r_1$), aimed at filtering out
cases where the model predicts correctly due to a
high correlation between $e_1$ and $e_3$. We perform this filtering for the model (Llama2-7B(-chat)) using greedy decoding creating a subset of the dataset. 

We are specifically interested in the model's reasoning performance, therefore, we further generate two dataset subsets: (1) The first subset is made up of 6,442 cases where the model correctly answers both the two-hop query and the first hop. (2) The second subset includes 2,320 cases where the
model correctly answers both the first and second
hop in isolation, but fails to answer the full two-hop
query. Notice that we only use the first subset for CoT training and the second subset for generalization evaluation.
When it is about to training and testing data: (1) The training set is made up of 6,442 cases where the model correctly answers both the two-hop query and the first hop. (2) The test set includes 2,320 cases where the model correctly answers both the first and second hop in isolation, but fails to answer the full two-hop query. 

We force the model to only output the answers to both hops, for example, the model input is “the capital of the country of the base of OpenAI is” and the model target is “the United States. Washington, D.C.” To be more specific, the first hop of the example is “the country of the base of OpenAI is the United States.” and the second hop of the example is “the capital of the United States is Washington, D.C.”. Due to resource limitations, we fine-tune the model for 100 epochs on the training set using LoRA \citep{lora} (rank=32) and test it on the test set. The framework and hardware information we used is provided in Appendix \ref{app:train details1}.

In more detail, we examined the outputs of the model after fine-tuning. Excitingly, when the model receives the input “the capital of the country of citizenship of Jenna Jameson is” (in test set), the first token it outputs is “Washington.”, which is also the first token of “Washington, D.C”. This indicates that CoT training does indeed provide the model with some generalization ability.

\end{document}